%% file: main.tex
\documentclass[letter, 10 pt, journal, twoside]{IEEEtran} 

\usepackage[english]{babel}
\usepackage[utf8]{inputenc}
\usepackage{times} 
\usepackage{cite}

\usepackage{booktabs}

\usepackage[hidelinks]{hyperref}
\usepackage[nolist]{acronym}

\usepackage{multirow}

\usepackage{subcaption}

\usepackage{siunitx}
\usepackage{graphicx} 
\usepackage{epsfig} 
\usepackage{pgfplots}
\pgfplotsset{compat=1.18}
\usepgfplotslibrary{statistics}
\usepgfplotslibrary{groupplots}

\usepackage{float}

\usepackage[export]{adjustbox}

\usepackage{xcolor}
\definecolor{myYellow}{rgb}{0.93,0.69,0.13}
\definecolor{myPurple}{rgb}{0.49,0.18,0.56}
\definecolor{myGreen}{rgb}{0.26 0.72 0.54}
\definecolor{darkgreen}{rgb}{0.272, 0.50, 0.376}
\definecolor{lightgreen}{rgb}{0.585, 0.82, 0.647}

\definecolor{cOrange}{RGB}{255,127,0}
\definecolor{cRed}{RGB}{214,39,40}
\definecolor{cBlue}{RGB}{31,119,180}
\definecolor{cGrey}{RGB}{127,127,127}
\definecolor{cGreen}{RGB}{44,160,44}

\colorlet{mydarkblue}{blue!30!black}

\usepackage{mathtools}
\usepackage{nicefrac}
\usepackage{amsmath}
\usepackage{amssymb}
\usepackage{bm} 
\usepackage{scalerel} 

\usepackage{amsthm}
\newtheoremstyle{compact}
  {4pt}   
  {4pt}   
  {\itshape}
  {}
  {\bfseries}
  {.}
  {0.5em}
  {}

\theoremstyle{compact}
\newtheorem{theorem}{Theorem}
\newtheorem{lemma}[theorem]{Lemma}
\newtheorem{proposition}[theorem]{Proposition}

\newtheorem{definition}{Definition}
\newtheorem{assumption}{Assumption}
\newtheorem{remark}[theorem]{Remark}

\newcommand{\diag}{\operatorname{diag}}

\DeclareMathAlphabet{\pazocal}{OMS}{zplm}{m}{n}

\usepackage{etoolbox}
\makeatletter%
\AfterPreamble{%
	\usepackage{hyperref}%
	\let\oldhypertarget\hypertarget%
	\renewcommand{\hypertarget}[2]{%
		\oldhypertarget{#1}{#2}%
		\protected@write\@mainaux{}{%
			\string\expandafter\string\gdef%
			\string\csname\string\detokenize{#1}\string\endcsname{#2}%
		}%
	}%
	\newcommand{\myhyperlink}[1]{%
		\hyperlink{#1}{\csname #1\endcsname}%
	}%
}
\makeatother%

\usepackage[ruled,vlined,linesnumbered]{algorithm2e}

\makeatletter
\def\BState{\State\hskip-\ALG@thistlm}
\makeatother

\usepackage{tikz}
\pgfdeclarelayer{foreground}
\pgfsetlayers{background, main, foreground}
\usetikzlibrary{quotes, angles, backgrounds, arrows, automata, shapes, positioning, calc, through, spy, decorations.pathreplacing, decorations.markings, arrows.meta, automata, petri, shapes.multipart, patterns}

\newlength{\panelwd}
\newlength{\panelht}

\tikzset{
    imglabel/.style={
      rectangle,
      inner sep=2pt,
      text=black,
      minimum height=1em,
      text centered,
      fill=white,
      fill opacity=1.0,
      text opacity=1,
      anchor=south west,
    },
  }
\tikzset{
	state/.style={
		rectangle,
		draw=black, very thick,
		minimum height=1.0em,
		text centered,
	},
}
\tikzset{
  on each segment/.style={
    decorate,
    decoration={
      show path construction,
      moveto code={},
      lineto code={
        \path [#1]
        (\tikzinputsegmentfirst) -- (\tikzinputsegmentlast);
      },
      curveto code={
        \path [#1] (\tikzinputsegmentfirst)
        .. controls
        (\tikzinputsegmentsupporta) and (\tikzinputsegmentsupportb)
        ..
        (\tikzinputsegmentlast);
      },
      closepath code={
        \path [#1]
        (\tikzinputsegmentfirst) -- (\tikzinputsegmentlast);
      },
    },
  },
  mid arrow/.style={postaction={decorate,decoration={
        markings,
        mark=at position .5 with {\arrow[#1]{stealth}}
      }}},
}

\tikzset{
  half circle/.style={
      semicircle,
      shape border rotate=180,
      anchor=chord center,
      minimum size=5mm
      }
}

\graphicspath{{./figures/}}

\newcommand{\coderepo}{{\url{https://anonymous.4open.science/r/tilt-certified-readiness-418A}}}

\begin{document}

\title{\LARGE \bf Tilt as a Certified Resource: Preserving\\ Motor Wrench-Rate Authority on Articulated Multirotors}

\author{Giuseppe Silano$^{1}$ and Martin Saska$^{2}$ 
    \thanks{$^1$Giuseppe Silano is with the Department of Power Generation Technologies and Materials, Ricerca sul Sistema Energetico (RSE) S.p.A., 20134 Milan, Italy, and also with the Department of Cybernetics, Czech Technical University in Prague, 12135 Prague, Czech Republic (e-mail: {\tt\small giuseppe.silano@fel.cvut.cz}).}
    \thanks{$^2$Martin Saska is with the Department of Cybernetics, Czech Technical University, 12135 Prague, Czech Republic (e-mail: {\tt\small martin.saska@fel.cvut.cz}).} 
    \thanks{This work was partially funded by the research fund for the Italian Electrical System (decree n. 388, Nov.~6th, 2024), by the GAČR project no.~26-22419S, by the CTU grant no. SGS26/077/OHK3/1T/13, and by the European Union under ROBOPROX reg. no. CZ.02.01.01/00/22\_008/0004590.}
    \thanks{The author used Google Gemini 3 and Anthropic Claude 5 Opus for proofreading and exposition refinement, but independently verified all outputs and assumes full responsibility for the article's content.}
    %
}

\maketitle
\thispagestyle{empty} 
\pagestyle{empty} 


\begin{acronym}
    \acro{BEM}[BEM]{Blade-Element-Momentum}
    \acro{CBF}[CBF]{Control Barrier Function}
    \acro{DAAM}[DAAM]{Drag-Aware Aerodynamic Manipulability}
    \acro{ESC}[ESC]{Electronic Speed Controller}
    \acro{MRAV}[MRAV]{Multi Rotor Aerial Vehicle}
    \acro{MPC}[MPC]{Model Predictive Control}
    \acro{NN}[NN]{Neural Network}
    \acro{QP}[QP]{Quadratic Program}
    \acro{RMS}[RMS]{Root Mean Square}
    \acro{UPC-QP}[UPC-QP]{Unified Physical-Command Quadratic Program}
\end{acronym}



\begin{abstract}
    Fully-actuated multirotor aerial vehicles must not only track nominal wrenches but retain the ``readiness'' to modulate them rapidly under disturbances. Classical effort-minimizing allocators ignore this dynamic limit, whereas maximizing readiness leads to topologically disconnected optimal sheets demanding physically impossible actuator rates. Enforcing a readiness safety floor on fixed-geometry symmetric platforms further encounters a zero-sum degeneracy: motor-speed redistribution cannot improve authority without conceding wrench tracking.
    This paper uses active morphology to break the degeneracy, treating servo tilt as a geometric resource supplying authority-recovery directions unavailable to static rotors. We construct a configuration-dependent, motor-only readiness certificate---the log-volume of the reachable wrench-rate set---that explicitly excludes servo capacity, preventing a ``ghost capacity fallacy'' in which the certificate would falsely credit slow mechanical kinematic limits instead of collapsing accurately at motor saturation. The certificate is enforced as a Control Barrier Function (CBF) within a Unified Physical-Command Quadratic Program acting on motor torques and servo setpoints. Closed-loop simulations of an articulated octorotor under severe gust disturbances show classical allocators diverging and uncertified articulated allocators violating the safety floor, while the proposed CBF filter bounds the system state and preserves vehicle authority.
\end{abstract}



\begin{IEEEkeywords}
     Aerial Systems: Mechanics and Control, Robot Safety, Optimization and Optimal Control, Redundant Robots.
\end{IEEEkeywords}



\section{Introduction}
\label{sec:introduction}

Fully-actuated \acfp{MRAV} have expanded aerial robotics' operational envelope \cite{Hamandi2021, Rashad2020}, decoupling translational and rotational dynamics through overactuation to track arbitrary six-degree-of-freedom trajectories and sustain physical interaction with the environment \cite{Ollero2022}. This redundancy introduces the classically ill-posed control allocation problem: mapping a $6$-dimensional target wrench to a higher-dimensional actuator-command space \cite{Johansen2013}.

The classical answer minimizes actuator effort via the Moore--Penrose pseudo-inverse or a quadratic program \cite{Johansen2013}---computationally light, but blind to a distinct dynamic limit we call \emph{wrench-rate readiness}: the rotors' remaining capacity to generate rapid wrench variations. Propeller thrust and drag both scale quadratically with spin rate, imposing competing limits: a slow rotor has a shallow thrust slope and cannot modulate force quickly, while a rotor near saturation has spent its torque budget fighting drag and has no headroom to accelerate. Readiness therefore peaks at an interior \emph{sweet spot} and collapses at both extremes of the speed range.

Consider an articulated \ac{MRAV} holding a contact force against a wall when a gust arrives: it may track its commanded wrench exactly, every rotor inside its speed limits, effort-minimization problem solved to optimality, and still fail---the rotors spinning fast enough that drag has consumed the torque budget, leaving no capacity to accelerate \cite{Bicego2020, Franchi2026aeropromptness}. Producing a nominal wrench and retaining the authority to change it are distinct capabilities.

Recent geometric work formalizes the latter through a drag-aware authority co-metric, \acf{DAAM}, whose log-determinant measures the reachable wrench-rate volume \cite{Franchi2026aeropromptness, FranchiVADA}. 
The co-metric $D(v)$, with $v \in \mathbb{R}^n$ the rotor spin rates, maps actuator bounds into task space; and the \emph{readiness level} $L(v)=\ln\det D(v)$ is that set's scalar log-volume. Maximizing $L$ along the \emph{task fiber}---the continuum of redundant rotor speeds satisfying the current wrench command---repels the allocation from both collapse modes; two obstacles remain.

\emph{The first is topological:} the fiberwise maximizer is set-valued, with optimal selections on disconnected sheets. The signed-quadratic thrust map partitions rotor-speed space into \emph{sign orthants}---convex regions where every rotor keeps a fixed spin direction, bounded by loci where it reverses. At such a zero crossing the thrust slope vanishes, the allocation Jacobian degenerates, and the actuator rate needed to sustain a bounded wrench rate diverges, so the fiber decomposes into components a bounded-rate allocator cannot cross (Figure~\ref{fig:section}). Low-pass filtering bounds the rate demand but introduces phase lag, dragging the trajectory through the low-authority crossing rather than avoiding it.
 
\emph{The second is geometric}, and motivates this paper. Rather than maximize authority, one may certify it via a floor on $L$ as a \acf{CBF} \cite{Kurtz2021, Zhang2024}. On symmetric fixed-geometry vehicles this meets a structural limitation \cite{Silano2026Readiness}: at a regular constrained optimum---interior in rotor speed, wrench-rate Jacobian of full row rank---every motor-speed perturbation preserving the commanded wrench leaves $L$ unchanged to first order, so the constant-wrench ascent available to motor redistribution is zero. Physically, the readiness gained by accelerating one rotor is cancelled by decelerating another; authority at the floor can then be recovered only by conceding wrench tracking. Section \ref{sec:art} formalizes this; the configuration that efficient hover selects is exactly where motor-only certification becomes indefensible.

\begin{figure}[tb]
    \centering
    \resizebox{0.825\columnwidth}{!}{%
        \input{figures/tikz/fig_section.tex}}
    \vspace{-0.3em}    
    \caption{\small Allocation section over commanded wrench $w$ for rotor speed $v_k$. Greedy maximization jumps between optimal sheets $\Sigma_\pm$ (dashed), demanding physically impossible actuator rates. The proposed barrier ($h \ge 0$) strictly confines the allocation to a single certified band, resolving authority-tracking conflicts via an explicit wrench slack $\delta$.}
    \label{fig:section}
    \vspace{-1em}
\end{figure}

We break the degeneracy through active morphology \cite{Rashad2020, Hamandi2021}, treating servo tilt as geometric resource rather than a tracking variable: physically tilting the rotors alters the allocation matrix's geometry, contributing wrench-rate directions unreachable by redistributing spin among fixed rotors. We are deliberate about scope---articulation is bidirectional, and an arbitrary tilt may raise or lower the metric---but a strictly favorable direction always \emph{exists} where static motor redistribution has none, and a certified allocator will select it.

We therefore certify rather than maximize: a configuration-dependent, motor-only readiness certificate enforces a floor on the log-determinant, crediting no servo authority and remaining conservative in the L\"owner order---not merely from caution, but because a servo-inclusive metric stays finite at motor saturation and would report a vehicle with no torque headroom as healthy. Restricting the metric to motor capacity preserves the divergence of $L$ at saturation and, with it, the property that a single inequality fences both collapse modes (Section \ref{sec:art}).

\emph{Contributions.} The core contributions of this manuscript are fourfold:
\begin{itemize}\itemsep1pt \parskip0pt \topsep2pt
    \item A first-order geometric characterization of the motor-only readiness degeneracy at a regular constrained optimum, with explicit regularity hypotheses (Section~\ref{sec:art}).
    \item A configuration-dependent readiness certificate, conservative in the L\"owner order, whose conservatism is what preserves the saturation indication that a servo-inclusive metric destroys (Sections~\ref{sec:model}--\ref{sec:art}).
    \item Enforcement of that certificate on motor and servo commands through a memoryless \ac{UPC-QP} with relative-degree-one barrier dynamics and an online feasibility margin, structurally free of integrator wind-up (Section~\ref{sec:upcqp}).
    \item A controlled three-way evaluation isolating the causal contribution of the readiness constraint from articulation alone under severe disturbance, in which an articulated allocator without the certificate still violates the floor (Section~\ref{sec:sim}).
\end{itemize}



\section{Related Work}
\label{sec:relatedWorkd}

Allocation for overactuated systems is mature \cite{Johansen2013}; \ac{MRAV} specialization is well developed: tilted-rotor platforms decouple translational and rotational dynamics \cite{Rashad2020, Allenspach2020}; actuation properties are characterized via force--moment decoupling \cite{Bicego2020}; surveys cover the platform class \cite{Hamandi2021} and applications \cite{Ollero2022}. The dominant allocator---pseudo-inverse or effort-minimizing program---ignores actuator bounds and saturates afterwards. Constrained dynamic allocation embeds those limits directly \cite{Harkegard2004}, reference governors enforce admissibility \cite{Convens2017}, full-pose tracking extends to limited actuation \cite{Hamandi2023} and bounded lateral force \cite{Franchi2018}, and null-space methods exploit redundancy instantaneously \cite{Su2021} or over a horizon \cite{Pretto2026}. All certify that the commanded wrench is currently \emph{attainable}; none floors the capacity to \emph{change} it, which is what decides whether a gust can be answered.

Measuring capability by reachable-set volume originates with Yoshikawa's manipulability index \cite{Yoshikawa1985}. \ac{DAAM} \cite{Franchi2026aeropromptness} adapts this to \acp{MRAV} by weighting each rotor by its \emph{aerodynamic} state, so the metric collapses as drag consumes torque margin, trading off against energy when maximized along the task fiber \cite{FranchiVADA}. We adopt this co-metric but not its use: a floor rather than a maximum sidesteps the set-valued, disconnected fiberwise maximizer.

Such a floor is naturally a \ac{CBF} \cite{Ames2019}. Determinant-type capability barriers appear in manipulator control: \cite{Kurtz2021} certify singularity avoidance with an exponential \ac{CBF} on the manipulability index. 
Closest to this work, \cite{Forghani2026singularity} certify a margin from rank-deficient regions of a state-dependent input--output map, using one barrier per eigenvalue---a construction requiring \emph{simple} eigenvalues. This is where our setting diverges: a symmetric airframe carries exactly repeated eigenvalues by construction, so per-eigenvalue barriers lose differentiability precisely where we need them, whereas $\ln\det D$ stays smooth wherever $D\succ0$. Their map is also weighted uniformly, while ours carries the drag-dependent capacity producing the collapse in the first place.

Feasibility of the enforcing program is a known concern, addressed through conditions on the optimal control problem, input-constrained constructions, soft-minimum barriers, and composite certificates for actuator lag \cite{Ames2019}; we inherit rather than resolve it, since our barrier acts on physical commands whose admissible set is a box, giving a closed-form per-step margin that we monitor instead of assume.

Rotor tilting is itself established for reshaping the feasible wrench set \cite{Rashad2020, Allenspach2020}, with recent omnidirectional platforms exploring actuation-aware allocation experimentally \cite{Hamandi2025prototype, Pretto2026}: there, tilt \emph{reaches} wrenches; we use it to \emph{preserve} the ability to change them. Fault-tolerant control is the complement in time, accommodating rotor loss after it occurs \cite{Mueller2014}; readiness acts before.




\section{System Model and Readiness Certificate}
\label{sec:model}

This section fixes the vehicle model, constructs the readiness co-metric, and separates three objects the paper keeps distinct. Every quantity here is computable from the current state by a single $m\times m$ factorization.



\subsection{Physical actuator dynamics}
\label{sec:actuators}

The vehicle carries $n>m=6$ actively articulable rotor groups. Rotor $i$ spins at a signed rate $v_i$, axis tilted by servo angle $\alpha_i$. Physical inputs are motor torques $\tau\in\mathbb{R}^n$ and the servo setpoints $\alpha_c\in\mathbb{R}^n$:
\begin{align}
    J_{m,i}\,\dot v_i &= \tau_i - b_i v_i|v_i|, & |\tau_i| &\le \bar\tau_i,
    \label{eq:motor}\\
    \dot\alpha_i &= \tfrac{1}{\tau_s}\big(\alpha_{c,i}-\alpha_i\big), &
    |\dot\alpha_i| &\le \bar u, \ \ \alpha_i\in[\underline\alpha, \bar\alpha],
    \label{eq:servo}
\end{align}
with lumped motor--propeller inertia $J_{m,i}$, drag coefficient $b_i$, torque limit $\bar\tau_i$, servo time constant $\tau_s$, and servo rate limit $\bar u$. The first-order rotor model with quadratic drag \eqref{eq:motor} is standard and experimentally identified \cite{Bicego2020}; the servo is a symmetric first-order lag, chosen to keep physical commands in the barrier's first derivative---whether that suffices to avoid high-order constructions \cite{Breeden2021} is established in Section~\ref{sec:upcqp}, not assumed.

Equation \eqref{eq:motor} determines the certificate's basis. For $v_i>0$ the admissible spin acceleration satisfies $|\dot v_i|\le \bar a_i(v_i)$, where
\begin{align}
    \bar a_i(v_i) = (\bar\tau_i - b_i v_i^2)/J_{m,i},
    \qquad
    v_i^{\mathrm{sat}} = \sqrt{\bar\tau_i/b_i},
    \label{eq:abar}
\end{align}
is the \emph{symmetric acceleration capacity}: maximal at rest, decreasing monotonically to zero at the drag-limited terminal speed $v_i^{\mathrm{sat}}$, where drag consumes the entire torque budget---formalizing that a rotor may be far from any speed limit yet have no capacity to accelerate.



\subsection{Wrench map and wrench-rate jacobians}

The body wrench follows the signed-quadratic thrust map
\begin{align}
    w = A(\alpha)\,\phi(v)\in\mathbb{R}^{m},
    \qquad \phi_i(v_i)=v_i|v_i|,
    \label{eq:wrench}
\end{align}
where $A_{\bullet i} \in \mathbb{R}^m$ collects the thrust direction and induced drag torque of rotor $i$ at cant $\alpha_i$. 
With $A'_{\bullet i}=\partial A_{\bullet i}/\partial\alpha_i$ the tilt direction of column $i$, differentiating \eqref{eq:wrench} gives
\begin{align}
    \dot w = J_v(v,\alpha)\,\dot v + J_\alpha(v,\alpha)\,\dot\alpha,
    \label{eq:wrenchrate}
\end{align}
with $J_v = 2A(\alpha)\diag(|v|)$ and $[J_\alpha]_{\bullet i}=A'_{\bullet i}\,\phi_i(v)$. Substituting \eqref{eq:motor}--\eqref{eq:servo} renders the wrench rate affine in the physical command $u=[\tau^{\!\top},\alpha_c^{\!\top}]^{\!\top}$,
\begin{align}
    \dot w = M(v,\alpha)\,u + d(v,\alpha),
    \label{eq:affine}
\end{align}
with input map and drift
\begin{align}
    \resizebox{0.90\columnwidth}{!}{$
    M = \big[\,J_v D_m \;\; \frac{1}{\tau_s}J_\alpha\,\big], \quad
    d = -J_v D_m\, b\odot\phi(v) - \frac{1}{\tau_s}J_\alpha\,\alpha,
    $}
    \label{eq:Md}
\end{align}
where $D_m=\diag(J_{m,i}^{-1})$, $b=(b_1,\dots,b_n)$, and $\odot$ is the elementwise product. Drift $d$ collects command-independent terms: drag torque to overcome and the servo lag on current tilt. Equation \eqref{eq:affine} is the form the Section~\ref{sec:upcqp} allocator acts on.



\subsection{The motor-only readiness co-metric}

Weighting each rotor by its capacity \eqref{eq:abar}, the wrench rates achievable by the motors at
fixed articulation contain the ellipsoid $\mathcal{E}=\{J_v W^{1/2}\zeta:\|\zeta\|_2\le1\}$ with $W=\diag(\bar a_i^2)$, whose squared volume is proportional to the determinant of the \ac{DAAM} co-metric
\begin{align}
    D(v,\alpha) \;=\; J_v W J_v^{\!\top} \;=\; 4\,A(\alpha)\,\Psi(v)\,A(\alpha)^{\!\top},
    \label{eq:D}
\end{align}
with $\Psi=\diag(\psi_i)$ and the per-rotor readiness weight is
\begin{align}
    \psi_i(v_i) = v_i^2\,\bar a_i(v_i)^2 ,
    \label{eq:psi}
\end{align}
where a prime denotes differentiation with respect to the rotor's own spin rate, $\psi_i'=\mathrm{d}\psi_i/\mathrm{d}v_i$. The scalar readiness level and the shifted safety barrier are
\begin{align}
    L(v,\alpha) = \ln\det D(v,\alpha),
    \quad
    h(v,\alpha) = L(v,\alpha) - \ell_{\min},
    \label{eq:L}
\end{align}
where $\ell_{\min}\in\mathbb{R}$ is the designer-chosen \emph{readiness floor}. 
The certified set is $\{(v,\alpha): h(v,\alpha)\ge0\}$, and Section~\ref{sec:sim} sets $\ell_{\min}$ relative to nominal hover readiness.
 
Two properties of \eqref{eq:psi} explain why a single scalar inequality suffices: $\psi_i$ vanishes at $v_i=0$ (zero thrust slope, no wrench rate contribution) and at $v_i=v_i^{\mathrm{sat}}$ ($\bar a_i=0$, no acceleration capacity)---in either case $\det D$ loses that rotor's contribution, so a floor on $L$ fences \emph{both} failure modes, dropout and saturation, without auxiliary logic. Moreover $\mathcal{E}$ is the image of the unit $2$-ball, contained in the actuator-rate box, so \eqref{eq:D} under-approximates the true reachable zonotope (Figure~\ref{fig:zonotope}) and $L$ never overstates authority; we avoid the maximum-volume (L\"owner--John) ellipsoid, since the $2$-ball image is closed-form and differentiable in $(v,\alpha)$, and the resulting conservatism is one-sided, hence safe.

\begin{figure}[tb]
    \centering
    \resizebox{0.95\columnwidth}{!}{%
        \input{figures/tikz/fig_geometry.tex}}
    \vspace{-0.5em}    
    \caption{\small Readiness certificate geometry. (a) Actuator-rate box bounded by symmetric capacity $\bar{a}_i(v_i)$. (b) Reachable wrench-rate zonotope under-approximated by inscribed ellipsoid $\mathcal{E}$. (c) Readiness level $L = \ln \det D$ measures the log-volume of this inner approximation.}
    \label{fig:zonotope}
    \vspace{-0.75em}
\end{figure}

\begin{remark}[Three distinct objects]\label{rem:threeobjects}
The motor-only co-metric $D(v,\alpha)$ in \eqref{eq:D}, depending only on current state, is the \emph{sole} basis of barrier $h$; the augmented metric $D_{\mathrm{art}} = D + \sum_i \rho_i A'_{\bullet i}A'^{\top}_{\bullet i}$, with $\rho_i=\bar u^2\phi_i(v)^2$, accounting for wrench rate from servo motion, is used in Section~\ref{sec:art} only to quantify the authority the certificate declines to credit and never enters a control constraint; and the plant \eqref{eq:motor}--\eqref{eq:servo} supplies the actuator rates, with $u$ the only quantity the allocator commands. Proposition~\ref{prop:optionA} shows that restricting the certificate to motor capacity is the right choice.
\end{remark}



\subsection{Gradients in closed form}
\label{sec:grads}

Both gradients needed to enforce \eqref{eq:L} are available analytically at every control instant.
 
\begin{lemma}[Readiness gradients]\label{lem:gradients}
Let $D(v,\alpha)\succ0$ and define the \emph{rotor leverage} $s_i$ and \emph{tilt cross-leverage} $c_i$,
\begin{align}
    s_i = A_{\bullet i}^{\!\top}D^{-1}A_{\bullet i},
    \qquad
    c_i = A_{\bullet i}^{\!\top}D^{-1}A'_{\bullet i}.
    \label{eq:leverage}
\end{align}
Then
\begin{align}
    \frac{\partial L}{\partial v_i} = 4\,\psi_i'(v_i)\,s_i,
    \qquad
    \frac{\partial L}{\partial \alpha_i} = 8\,\psi_i(v_i)\,c_i .
    \label{eq:grads}
\end{align}
\end{lemma}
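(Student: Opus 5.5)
The plan is to apply Jacobi's formula for the derivative of a log-determinant, $\mathrm{d}\ln\det D = \operatorname{tr}(D^{-1}\,\mathrm{d}D)$, valid wherever $D\succ0$. The work is then to identify, for each scalar variable, the variation of $D = 4A(\alpha)\Psi(v)A(\alpha)^{\top}$. The decomposition $D = 4\sum_{j}\psi_j(v_j)\,A_{\bullet j}A_{\bullet j}^{\top}$, a sum of rank-one terms, does most of the work. Crucially, $v_i$ enters only through $\psi_i$, and $\alpha_i$ enters only through column $A_{\bullet i}$, since each column depends solely on its own cant angle.

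For the spin-rate gradient, I would differentiate the rank-one sum in $v_i$, which gives $\partial D/\partial v_i = 4\psi_i'(v_i)A_{\bullet i}A_{\bullet i}^{\top}$. The trace identity $\operatorname{tr}(D^{-1}xx^{\top}) = x^{\top}D^{-1}x$ then yields $\partial L/\partial v_i = 4\psi_i' \,A_{\bullet i}^{\top}D^{-1}A_{\bullet i} = 4\psi_i' s_i$.

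For the tilt gradient, the product rule on the $i$-th term gives
\begin{align*}
\partial D/\partial\alpha_i = 4\psi_i\big(A'_{\bullet i}A_{\bullet i}^{\top} + A_{\bullet i}A'^{\top}_{\bullet i}\big).
\end{align*}
Taking $\operatorname{tr}(D^{-1}\cdot)$ and using the symmetry of $D^{-1}$, both terms equal $A_{\bullet i}^{\top}D^{-1}A'_{\bullet i} = c_i$. This gives the factor $8\psi_i c_i$.

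No step is a genuine obstacle; the main thing to check is the modelling assumption that column $A_{\bullet i}$ depends only on $\alpha_i$ and not on the other tilts. This is implicit in the definition $A'_{\bullet i} = \partial A_{\bullet i}/\partial\alpha_i$ and guarantees that the cross-derivatives $\partial A_{\bullet j}/\partial\alpha_i$ vanish for $j\ne i$. A second point concerns differentiability. $\psi_i = v_i^2\bar a_i^2$ uses the formula \eqref{eq:abar} for $v_i>0$; for signed rates one notes that $\psi_i$ is smooth in $v_i$ wherever that formula applies, so $\psi_i'$ is well defined there. Finally, the hypothesis $D\succ0$ ensures $L$ is smooth and $D^{-1}$ exists.
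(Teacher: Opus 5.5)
Your proof is correct and follows the paper's argument essentially verbatim. It applies Jacobi's formula to the rank-one decomposition $D=4\sum_j\psi_j A_{\bullet j}A_{\bullet j}^{\top}$, uses that only the $i$-th term depends on $v_i$ or $\alpha_i$, and uses the symmetry of $D^{-1}$ to collapse the two tilt terms into $2c_i$; your remarks on the column-wise dependence of $A$ on $\alpha_i$ and on the smoothness of $\psi_i$ only make explicit what the paper leaves implicit.
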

\begin{proof}
By Jacobi's formula, $\partial\ln\det D=\operatorname{tr}(D^{-1}\partial D)$. Only the $i$-th term of \eqref{eq:D} depends on $v_i$ or $\alpha_i$: $\partial D/\partial v_i = 4\psi_i' A_{\bullet i}A_{\bullet i}^{\!\top}$ and $\partial D/\partial \alpha_i = 4\psi_i\big(A'_{\bullet i}A_{\bullet i}^{\!\top} + A_{\bullet i}A'^{\top}_{\bullet i}\big)$. Taking traces against $D^{-1}$ gives \eqref{eq:grads}, using symmetry of $D^{-1}$. 
\end{proof}

Both expressions share one factorization of $D$, so the cost per step is a single $m\times m$ inverse and $2n$ quadratic forms---negligible for embedded flight control. 

Define the capacity-weighted \emph{leverage score} $\sigma_i = 4\psi_i s_i$. The trace identity gives
\begin{align}
    \textstyle\sum_i \sigma_i = \operatorname{tr}(D^{-1}D) = m,
    \label{eq:trace}
\end{align}
so the $m$ wrench dimensions are shared among $n$ rotors, $\sigma_i$ the fraction of readiness volume rotor $i$ supports.
 
This also quantifies the cost of losing a rotor: setting $\psi_i=0$ removes the rank-one term $4\psi_i A_{\bullet i}A_{\bullet i}^{\!\top}$ from \eqref{eq:D}, the matrix determinant lemma 
gives $\det\big(D-4\psi_iA_{\bullet i}A_{\bullet i}^{\!\top}\big)=\det(D)\,(1-\sigma_i)$, so the readiness drop is exactly
\begin{align}
    \Delta_i = -\ln\big(1-\sigma_i\big).
    \label{eq:dropout}
\end{align}
The fraction $\sigma_i\!\to\!1$ marks an essential rotor whose loss makes $D$ singular, $\Delta_i\!\to\!\infty$. On a symmetric platform at uniform spin, $\sigma_i=m/n$, and \eqref{eq:dropout} reduces to the finite \emph{dropout gap} $\Delta=\ln\!\big(n/(n-m)\big)$---a bounded, predictable degradation.
 
Equation \eqref{eq:trace} has one further consequence: when every $\sigma_i$ is equal, $\nabla_v L$ is proportional to the vector of ones, so a constant-wrench motor redistribution trades readiness between rotors at no net gain---stated precisely with regularity conditions in Section~\ref{sec:art}.



\section{Articulation as a Readiness Resource}
\label{sec:art}

We now show that rotor-speed redistribution alone cannot defend the certificate at the configuration efficient flight selects, that articulation supplies what is missing, and that the certificate must nonetheless exclude servo authority. 



\subsection{The zero-sum degeneracy}

\begin{assumption}\label{as:regular}
$v^\star$ is interior to $\{v: 0<v<v^{\mathrm{sat}}\}$, $J_v(v^\star)$ has full row rank $m$, and $v^\star$ is a regular local maximizer of $L(\cdot,\alpha)$ subject to $w(v,\alpha)=w^\star$, so a unique Lagrange multiplier $\lambda\in\mathbb{R}^m$ exists.
\end{assumption}
 
\begin{proposition}[Zero-sum authority degeneracy]\label{prop:degeneracy}
Under Assumption~\ref{as:regular}, $\nabla_v L(v^\star)=J_v(v^\star)^{\!\top}\lambda$, and the constant-wrench authority ascent available through rotor-speed redistribution,
\begin{align}
    \gamma \;=\; \max\big\{\nabla_v L^{\!\top}\xi \;:\; J_v\xi = 0,\ |\xi|\le\bar a\big\},
    \label{eq:gamma}
\end{align}
is zero. Here $\xi\in\mathbb{R}^n$ is a candidate spin-acceleration perturbation.
\end{proposition}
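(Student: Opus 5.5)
The plan has two parts: first-order stationarity, then evaluation of the linear program \eqref{eq:gamma} using it.

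\emph{Step 1 (stationarity).} The task fiber is $\mathcal{F}=\{v: w(v,\alpha)=w^\star\}$. Differentiating \eqref{eq:wrench} in $v$ at fixed $\alpha$ gives $\partial w/\partial v = A(\alpha)\,\diag(\phi'(v)) = 2A(\alpha)\diag(|v|)=J_v$, using $\phi_i'(v_i)=2|v_i|$. By Assumption~\ref{as:regular}, $v^\star$ is interior to the speed box, so the box constraints are inactive. Since $J_v(v^\star)$ has full row rank $m$, the linear-independence constraint qualification holds. Near $v^\star$ the fiber is then an $(n-m)$-dimensional $C^1$ manifold with tangent space $\ker J_v(v^\star)$.

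\emph{Step 2 (Lagrange condition).} $L$ is smooth near $v^\star$: $D\succ0$ there because $J_v$ has full rank and every $\psi_i>0$ on the interior, and Lemma~\ref{lem:gradients} gives the gradient explicitly. The classical Lagrange multiplier theorem therefore applies to the local maximizer $v^\star$ and yields $\nabla_v L(v^\star)=J_v(v^\star)^{\top}\lambda$, with $\lambda$ unique by full row rank. The sign convention is immaterial and can be absorbed into $\lambda$.

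\emph{Step 3 (evaluating $\gamma$).} Take any feasible $\xi$ in \eqref{eq:gamma}, so $J_v\xi=0$ and $|\xi|\le\bar a$. Then
\begin{align*}
\nabla_v L^{\top}\xi=\lambda^{\top}J_v\xi=0 .
\end{align*}
The objective is therefore identically zero on the feasible set. The feasible set is nonempty, since $\xi=0$ belongs to it; note $\bar a_i>0$ on the interior because $v_i^\star<v_i^{\mathrm{sat}}$. Hence the maximum exists and $\gamma=0$.

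I would also remark that the box constraint $|\xi|\le\bar a$ plays no role, since the objective vanishes on the whole subspace $\ker J_v$. This is the precise sense of "zero-sum": any constant-wrench redistribution changes $L$ only at second order.

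\emph{Main obstacle.} The only delicate point is checking the hypotheses of the Lagrange theorem rather than the algebra. One must confirm that the constraint Jacobian really is $J_v$. This holds because $\phi$ is $C^1$ with $\phi'(v_i)=2|v_i|$, which holds even across sign changes, and interiority keeps us away from $v_i=0$ anyway. One must also confirm that $L$ is differentiable at $v^\star$, which requires $D(v^\star)\succ0$. That follows from $D=J_vWJ_v^{\top}$ with $W\succ0$ on the interior and $J_v$ of full row rank.
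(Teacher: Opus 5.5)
Your proof is correct and follows essentially the same route as the paper: the Lagrange condition under LICQ (interiority plus full row rank of $J_v$), then $\nabla_v L^{\top}\xi=\lambda^{\top}J_v\xi=0$ on the nonempty feasible set, so $\gamma=0$. Your extra checks fill in details the paper leaves implicit: that the constraint Jacobian really is $J_v$ because $\phi_i'(v_i)=2|v_i|$, and that $D(v^\star)\succ0$, which makes $L$ differentiable at $v^\star$.
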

 
\begin{proof}
Interiority removes the box constraints; full row rank of $J_v(v^\star)$ is the linear independence constraint qualification under which $\lambda$ exists uniquely. The first-order necessary condition for the constrained maximizer is therefore $\nabla_v L(v^\star)=J_v(v^\star)^{\!\top}\lambda$. Any feasible $\xi$ in \eqref{eq:gamma} lies in $\ker J_v(v^\star)$, whence
\begin{align}
    \nabla_v L^{\!\top}\xi
    \;=\; \big(J_v^{\!\top}\lambda\big)^{\!\top}\xi
    \;=\; \lambda^{\!\top}\big(J_v\xi\big) \;=\; 0 .
    \label{eq:zerosum}
\end{align}
The linear objective therefore vanishes on the entire feasible set, which is nonempty since $\xi=0$ is admissible, so its maximum is zero.
\end{proof}

\begin{figure}[tb]
    \centering
    \resizebox{0.85\columnwidth}{!}{%
        \input{figures/tikz/fig_degeneracy.tex}}
    \vspace{-0.5em}    
    \caption{\small Authority ascent geometry. (a) At $v^\star$, the readiness gradient is orthogonal to $\ker J_v$, yielding zero constant-wrench ascent ($\gamma=0$). (b) Active tilt supplies a recovery direction outside $\ker J_v$, restoring positive ascent ($\gamma_{\mathrm{art}}>0$) at the same commanded wrench.}
    \label{fig:degeneracy}
    \vspace{-0.5em}
\end{figure}

Two qualifications: the result is \emph{first-order}---no admissible constant-wrench perturbation increases $L$ to first order, not that the state cannot move---and \emph{local}, holding at $v^\star$ under Assumption~\ref{as:regular}. What it establishes is that the recovery direction available to rotor-speed redistribution vanishes exactly where readiness is optimal, so a motor-only allocator meeting the floor there must concede wrench tracking to restore authority.
 
The mechanism is concrete (Figure \ref{fig:degeneracy}a): on a symmetric octorotor in hover, $n=8$, $m=6$ leave a genuine two-dimensional null space, and since all rotors share an operating point, \eqref{eq:trace} gives every rotor $\sigma_i=m/n$ and $\nabla_v L\propto\mathbf{1}$. Accelerating one rotor group while holding the wrench forces compensating deceleration elsewhere, and since every rotor has identical thrust sensitivity and drag headroom, the readiness gained is cancelled exactly by the readiness lost---the exchange is zero-sum, which is \eqref{eq:zerosum} read backwards.



\subsection{Restoring the ascent through articulation}

Admitting the servo tilt rates $\mu=\dot\alpha\in\mathbb{R}^n$ as a second resource enlarges \eqref{eq:gamma} to
\begin{align}
    \gamma_{\mathrm{art}} = \max_{\xi,\mu}\ &\nabla_v L^{\!\top}\xi
    + \nabla_\alpha L^{\!\top}\mu \nonumber\\
    \text{s.t. } & J_v\xi + J_\alpha\mu = 0, \quad
    |\xi|\le\bar a, \quad |\mu|\le\bar u ,
    \label{eq:gart}
\end{align}
where the equality constraint holds the commanded wrench---tilt perturbs it, motors absorb the perturbation (Figure \ref{fig:degeneracy}b). This \emph{actuator-envelope} quantity bounds what the actuators could do at the current state; Section~\ref{sec:upcqp} decides what they actually do.
 
\begin{proposition}[Certified articulated ascent]\label{prop:ascent}
Let $J_v^{+}$ be the Moore--Penrose pseudoinverse; define the \emph{reduced tilt gradient} and the induced motor-correction direction
\begin{align}
    \tilde g = \nabla_\alpha L - J_\alpha^{\!\top}\big(J_v^{+}\big)^{\!\top}\nabla_v L,
    \qquad
    z = J_v^{+}J_\alpha\operatorname{sign}(\tilde g).
    \label{eq:gtilde}
\end{align}
Then $\gamma_{\mathrm{art}}\ \ge\ \beta\,\|\tilde g\|_1\,\bar u$, with the dimensionless feasibility factor $\beta=\min\{1,\ \min_i \bar a_i/(\bar u|z_i|)\}$.
\end{proposition}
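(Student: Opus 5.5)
The plan is to exhibit one explicit feasible pair $(\xi,\mu)$ for the program \eqref{eq:gart} whose objective value equals $\beta\|\tilde g\|_1\bar u$. Since $\gamma_{\mathrm{art}}$ is a maximum, any feasible value is a lower bound, so this gives the claim. The construction is the natural one suggested by \eqref{eq:gtilde}. Tilt every servo at the largest common rate in the direction of the reduced gradient, $\mu=\beta\bar u\,\operatorname{sign}(\tilde g)$. Then let the motors cancel the induced wrench-rate perturbation through the minimum-norm correction, $\xi=-J_v^{+}J_\alpha\mu=-\beta\bar u\,z$.

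\emph{Wrench equality.} I first record that $J_v$ has full row rank $m$. The hypothesis $D\succ0$ used throughout, together with $D=J_vWJ_v^{\!\top}$ in \eqref{eq:D}, forces $J_vW^{1/2}$ and hence $J_v$ to have rank $m$; alternatively this follows directly from Assumption~\ref{as:regular}. Full row rank gives $J_vJ_v^{+}=I_m$, so
\[
J_v\xi+J_\alpha\mu=-J_vJ_v^{+}J_\alpha\mu+J_\alpha\mu=0 .
\]

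\emph{Box constraints.} Since $\beta\le1$ and each entry of $\operatorname{sign}(\tilde g)$ has modulus at most one, $|\mu_i|\le\beta\bar u\le\bar u$. For the motors, $|\xi_i|=\beta\bar u|z_i|$, and the definition of $\beta$ gives $\beta\le\bar a_i/(\bar u|z_i|)$ whenever $z_i\neq0$. Hence $|\xi_i|\le\bar a_i$, and the case $z_i=0$ is trivial. Here I adopt the convention that indices with $z_i=0$ are omitted from the inner minimum, i.e.\ they contribute $+\infty$.

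\emph{Objective value.} Substituting the construction,
\[
\nabla_vL^{\!\top}\xi+\nabla_\alpha L^{\!\top}\mu
=\mu^{\!\top}\big(\nabla_\alpha L-J_\alpha^{\!\top}(J_v^{+})^{\!\top}\nabla_vL\big)
=\mu^{\!\top}\tilde g
=\beta\bar u\,\operatorname{sign}(\tilde g)^{\!\top}\tilde g
=\beta\bar u\,\|\tilde g\|_1 .
\]
This is the claimed lower bound.

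The computation itself is routine. The step needing care is the justification of $J_vJ_v^{+}=I$, which is exactly where the rank of $J_v$ enters. A second point to state explicitly is that $\beta\ge0$ always, so the bound is valid but may be vacuous; it is strictly positive only when every $\bar a_i>0$, i.e.\ at interior speeds $0<v_i<v_i^{\mathrm{sat}}$ as in Assumption~\ref{as:regular}. I would also remark that at the point $v^\star$ of Proposition~\ref{prop:degeneracy}, the term $(J_v^{+})^{\!\top}\nabla_vL$ equals $\lambda$. Consequently $\tilde g=\nabla_\alpha L-J_\alpha^{\!\top}\lambda$ is the Lagrangian tilt sensitivity, and it is nonzero precisely when articulation supplies a recovery direction absent from motor redistribution.
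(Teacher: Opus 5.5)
Your proof is correct and follows the paper's argument. It uses the same saturated tilt rate $\bar u\operatorname{sign}(\tilde g)$ with the minimum-norm motor correction $-J_v^{+}J_\alpha\mu$, scaled by $\beta$ to meet the motor box (you fold $\beta$ in from the start, whereas the paper scales afterwards), and your explicit derivation of $J_vJ_v^{+}=I_m$ from $D=J_vWJ_v^{\!\top}\succ0$ and your handling of indices with $z_i=0$ fill in what the paper leaves as ``by construction.''
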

 
\begin{proof}
Take the tilt rate $\mu^\star=\bar u\operatorname{sign}(\tilde g)$, which saturates the servo box, and the minimum-norm motor correction $\xi^\star=-J_v^{+}J_\alpha\mu^\star =-\bar u z$ holding the wrench, so the pair satisfies the equality constraint of \eqref{eq:gart} by construction. Substituting into the objective and using $\operatorname{sign}(\tilde g)^{\!\top}\tilde g=\|\tilde g\|_1$ gives $\nabla_v L^{\!\top}\xi^\star+\nabla_\alpha L^{\!\top}\mu^\star=\|\tilde g\|_1\bar u$. The pair need not be admissible, since $\xi^\star$ may exceed the motor box; but scaling both components by $\beta\in(0,1]$ preserves the equality constraint, scales the objective linearly, and by the definition of $\beta$ yields $|\beta\xi^\star_i|=\beta\bar u|z_i|\le \bar a_i$ for every $i$. The scaled pair is thus feasible and attains $\beta\|\tilde g\|_1\bar u$, which lower-bounds the maximum.
\end{proof}
 
The factor $\beta$ is the price of holding the wrench: tilting perturbs it, motors must undo the perturbation, and $\beta<1$ when they lack headroom to do so at full servo rate.
 
At $v^\star$ the construction simplifies: since $\nabla_v L$ vanishes there, the second term of \eqref{eq:gtilde} drops and $\tilde g$ becomes purely geometric, $\tilde g_i = 2A_{\bullet i}^{\!\top}(AA^{\!\top})^{-1}A'_{\bullet i}$, independent of spin, alternating in sign around a symmetric airframe---opposite-sign tilt on adjacent arms inflates readiness volume while induced wrench perturbations cancel in pairs. A gradient proportional to $\mathbf{1}$ cannot do this in the rotor-speed null space, where the same pairing cancels the readiness gain as well; articulation escapes the trade because it changes the geometry of $A(\alpha)$ rather than redistributing effort within a fixed one, avoiding the drag bottleneck entirely.
 
\begin{remark}[Bidirectionality]\label{rem:bidirectional}
Articulation is not uniformly beneficial: an arbitrary tilt perturbation may lower $L$ as easily as raise it. Proposition~\ref{prop:ascent} is an existence statement: a favorable direction exists where rotor-speed redistribution has none.
\end{remark}



\subsection{Why the certificate excludes servo authority}
 
Servo motion produces wrench rate, so a metric crediting it exists: $D_{\mathrm{art}} = D + \sum_i \rho_i A'_{\bullet i}A'^{\top}_{\bullet i}$ with $\rho_i=\bar u^2\phi_i(v)^2$. We deliberately do not certify against it.
 
\begin{proposition}[Conservatism and saturation indication]\label{prop:optionA}
\emph{(i)} $D \preceq D_{\mathrm{art}}$ in the L\"owner order, hence $L\le L_{\mathrm{art}}$ pointwise and $\{L\ge\ell_{\min}\}\subseteq\{L_{\mathrm{art}}\ge\ell_{\min}\}$: certifying $h\ge0$ certifies the articulated authority as well. 
\emph{(ii)} If $\operatorname{rank}\big[A'_{\bullet 1}\phi_1,\dots,A'_{\bullet n}\phi_n\big]=m$ at saturation, then $L_{\mathrm{art}}$ remains finite as $v\to v^{\mathrm{sat}}$, whereas $L\to-\infty$.
\end{proposition}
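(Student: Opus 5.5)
For part \emph{(i)} the plan is a direct Löwner-order argument. By construction $D_{\mathrm{art}}-D=\sum_i \rho_i A'_{\bullet i}A'^{\top}_{\bullet i}$. Each $\rho_i=\bar u^2\phi_i(v)^2\ge 0$, so this difference is a nonnegative combination of rank-one positive semidefinite matrices, and therefore $D\preceq D_{\mathrm{art}}$.

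To pass from this to the log-determinants, assume $D\succ0$, which is exactly the regime where $L$ is finite. Then $D_{\mathrm{art}}\succ0$ as well. Factor
\begin{align*}
\det D_{\mathrm{art}}=\det D\cdot\det\big(I+D^{-1/2}(D_{\mathrm{art}}-D)D^{-1/2}\big).
\end{align*}
The second factor is at least $1$, because its argument is $I$ plus a positive semidefinite matrix, so every eigenvalue is $\ge1$. Monotonicity of $\ln$ then gives $L\le L_{\mathrm{art}}$. If $D$ is singular, set $L=-\infty$ and the inequality holds trivially.

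The set inclusion follows at once: $L\ge\ell_{\min}$ implies $L_{\mathrm{art}}\ge L\ge\ell_{\min}$. So any state certified by $h\ge0$ is also certified under the articulated metric.

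For part \emph{(ii)} I treat the two metrics separately, working at fixed $\alpha$ and taking $v\to v^{\mathrm{sat}}$ componentwise.

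\emph{Divergence of $L$.} By \eqref{eq:abar}, $\bar a_i(v_i)\to0$ as $v_i\to v_i^{\mathrm{sat}}$, so $\psi_i=v_i^2\bar a_i^2\to0$ by \eqref{eq:psi}. In the limit, $D=4A\Psi A^{\top}$ in \eqref{eq:D} therefore tends to the zero matrix. Since $\det$ is continuous, $\det D\to0$ and $L\to-\infty$.

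The same conclusion holds if only one rotor saturates, provided that rotor is essential ($\sigma_i\to1$ in \eqref{eq:dropout}). I read the statement as all rotors saturating, since $v\to v^{\mathrm{sat}}$ is a vector limit, and I will say this explicitly.

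\emph{Finiteness of $L_{\mathrm{art}}$.} The servo term converges: $\phi_i(v)\to v_i^{\mathrm{sat}}|v_i^{\mathrm{sat}}|=\bar\tau_i/b_i\neq0$, and $A'(\alpha)$ is fixed. Write $B=[A'_{\bullet1}\phi_1,\dots,A'_{\bullet n}\phi_n]$. Then
\begin{align*}
\sum_i\rho_iA'_{\bullet i}A'^{\top}_{\bullet i}=\bar u^2 BB^{\top},
\end{align*}
and the limit of $D_{\mathrm{art}}$ is $\bar u^2B_{\mathrm{sat}}B_{\mathrm{sat}}^{\top}$, because the motor part vanishes. The rank hypothesis makes $B_{\mathrm{sat}}$ of row rank $m$, so $B_{\mathrm{sat}}B_{\mathrm{sat}}^{\top}\succ0$ and its determinant is positive. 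Continuity of $\ln\det$ on the positive-definite cone then gives a finite limit of $L_{\mathrm{art}}$.

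The main obstacle is interpreting the limit correctly: whether ``$v\to v^{\mathrm{sat}}$'' means all rotors saturate or only some do. If only some do, $L\to-\infty$ needs the non-saturated terms to lose rank, which in general they do not. I would therefore state the all-rotor interpretation, where the argument above is clean. I would add a remark that for partial saturation of a single rotor, $L$ drops by the finite amount $\Delta_i$ from \eqref{eq:dropout} while $L_{\mathrm{art}}$ does not register that rotor's loss, which is the same qualitative point.
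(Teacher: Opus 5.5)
Your proposal is correct and follows the paper's proof: for (i) you use the same PSD rank-one decomposition plus monotonicity of $\ln\det$, which you prove directly via the factorization $\det D_{\mathrm{art}}=\det D\cdot\det\big(I+D^{-1/2}(D_{\mathrm{art}}-D)D^{-1/2}\big)$ where the paper simply cites it. For (ii) you argue, as the paper does, that $\Psi\to0$ forces $D\to0$ while the servo term converges to $\bar u^2 B_{\mathrm{sat}}B_{\mathrm{sat}}^{\top}\succ0$; your explicit all-rotor reading of $v\to v^{\mathrm{sat}}$ is exactly what the paper assumes implicitly when it writes $\Psi\to0$.
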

 
\begin{proof}
(i) The difference $D_{\mathrm{art}}-D=\sum_i\rho_i A'_{\bullet i}A'^{\top}_{\bullet i}$ is a sum of rank-one positive semidefinite terms, hence $\succeq0$. Monotonicity of $\ln\det$ on the positive-definite cone gives $L\le L_{\mathrm{art}}$, and the set inclusion is immediate.
(ii) As $v\to v^{\mathrm{sat}}$ we have $\bar a\to0$ by \eqref{eq:abar}, so $\Psi\to0$, $D\to0$ and $L\to-\infty$. The servo weights behave oppositely: $\rho_i\to\bar u^2\phi_i(v^{\mathrm{sat}})^2>0$, since $\phi_i$ is evaluated at the \emph{largest} admissible spin. Under the rank hypothesis $\det D_{\mathrm{art}}$ tends to a finite positive limit.
\end{proof}
 
Part~(ii) forces the choice: under a gust the response rests on motors (high electrical bandwith), while servos are mechanically slow. A barrier built on $D_{\mathrm{art}}$ would nevertheless read those slow kinematic limits as authority---the \emph{ghost capacity} fallacy---reporting a vehicle with no remaining torque margin as safe. Restricting the metric to motor capacity avoids this and, by part~(i), costs only conservatism.



\subsection{The admissible tilt set}
 
Tilt-dependent allocation matrices lose rank at flat configurations \cite{Allenspach2020}, usually handled by a dedicated constraint; here the floor supplies it.
 
\begin{proposition}[Singularity exclusion]\label{prop:tiltbox}
Let $L^{\max}(\alpha)=\max_v L(v,\alpha)$ and $\mathcal{A}(\ell_{\min})=\{\alpha : L^{\max}(\alpha)>\ell_{\min}\}$. Then $\mathcal{A}(\ell_{\min})$ contains no configuration with $\operatorname{rank}A(\alpha)<m$, and on $\mathcal{A}(\ell_{\min})$ the certified set $\{h\ge0\}$ has nonempty interior in $v$.
\end{proposition}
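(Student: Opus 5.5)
Both claims follow directly from the factorization \eqref{eq:D}, $D=4A(\alpha)\Psi(v)A(\alpha)^{\top}$, so the proof reduces to elementary facts about ranks and determinants. We adopt the convention $L=-\infty$ whenever $\det D=0$. This is consistent with the limit in Proposition~\ref{prop:optionA}(ii), and it makes $L^{\max}(\alpha)$ well defined as an extended-real supremum over the compact box $0\le |v|\le v^{\mathrm{sat}}$. On this box $\psi_i$ is continuous and bounded, and $L$ is upper semicontinuous with values in $[-\infty,\infty)$, so the supremum is attained or equals $-\infty$.

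\emph{Singularity exclusion.} Suppose $\operatorname{rank}A(\alpha)<m$. For every $v$, $\operatorname{rank}D(v,\alpha)\le\operatorname{rank}A(\alpha)<m$, because $D=4A\Psi A^{\top}$ factors through $A$. Hence $\det D(v,\alpha)=0$ and $L(v,\alpha)=-\infty$ for all $v$. Therefore $L^{\max}(\alpha)=-\infty$, which is not greater than any real $\ell_{\min}$, and so $\alpha\notin\mathcal{A}(\ell_{\min})$. Taking the contrapositive, every $\alpha\in\mathcal{A}(\ell_{\min})$ has $\operatorname{rank}A(\alpha)=m$.

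\emph{Nonempty interior.} Fix $\alpha\in\mathcal{A}(\ell_{\min})$. Since $L^{\max}(\alpha)>\ell_{\min}$ is finite from below, some $v^0$ satisfies $L(v^0,\alpha)>\ell_{\min}$; in particular $D(v^0,\alpha)\succ0$. On the open set $\{v: D(v,\alpha)\succ0\}$ the map $v\mapsto L(v,\alpha)$ is continuous, even $C^1$ with the gradients of Lemma~\ref{lem:gradients}. This holds because $\psi_i(v_i)=v_i^2\bar a_i(v_i)^2$ is polynomial in $|v_i|$, and $\ln\det$ is smooth on the positive-definite cone. By continuity there is a neighbourhood $U$ of $v^0$ on which $L(\cdot,\alpha)>\ell_{\min}$, so $U\subseteq\{h(\cdot,\alpha)\ge0\}$. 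This gives the nonempty interior.

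The main obstacle is a boundary subtlety: $v^0$ might sit on the boundary of the admissible speed box, or on a sign-orthant boundary where some $v_i=0$. For the sign-orthant case, continuity of $\psi_i$ through $v_i=0$ handles it, since $\psi_i$ depends on $v_i^2$ and $|v_i|$ continuously. For the box boundary, I would instead use the interior point $tv^0$ with $t<1$ close to $1$. Continuity of $L$ at $v^0$ keeps $L(tv^0,\alpha)>\ell_{\min}$, so the neighbourhood can be taken inside the open box $|v|<v^{\mathrm{sat}}$.
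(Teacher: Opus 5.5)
Your proof is correct and follows the paper's argument. For the first claim, the factorization $D=4A\Psi A^{\top}$ forces $\det D\equiv 0$, hence $L^{\max}(\alpha)=-\infty$, whenever $\operatorname{rank}A(\alpha)<m$; for the second, continuity of $L(\cdot,\alpha)$ at a point with $L>\ell_{\min}$ gives an open neighbourhood inside $\{h\ge0\}$. Your extra care fills in details the paper leaves implicit: you restrict $L^{\max}$ to the compact box $|v|\le v^{\mathrm{sat}}$ (outside it the formula for $\psi_i$ would make $L$ unbounded above), and you shift to $tv^0$ to keep the neighbourhood off the box boundary.
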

 
\begin{proof}
If $\operatorname{rank}A(\alpha)<m$ then $D(v,\alpha)=4A\Psi A^{\!\top}$ is rank-deficient for every $v$ by \eqref{eq:D}, so $\det D=0$, $L^{\max}(\alpha)=-\infty<\ell_{\min}$, and $\alpha\notin\mathcal{A}(\ell_{\min})$. For the second claim, some $v$ attains $L(v,\alpha)>\ell_{\min}$; continuity of $L$ in $v$ gives a neighborhood with $h>0$.
\end{proof}
 
The floor defending authority thus also fences the singularity, needing no separate avoidance logic. The two effects pull oppositely on cant angle: the available ascent $\|\tilde g\|_1$ is largest at shallow cant, while $\mathcal{A}(\ell_{\min})$ excludes the shallowest configurations precisely because they approach rank deficiency. 



\section{Enforcement Through Physical Commands}
\label{sec:upcqp}

What remains is enforcing the floor using hardware commands---motor torques and servo setpoints---rather than the idealized actuator rates of \eqref{eq:gart}. 



\subsection{Barrier dynamics and relative degree}

Write $x=(v,\alpha)$, differentiating $h$ along \eqref{eq:motor}--\eqref{eq:servo} gives
\begin{align}
    \dot h &= \sum_{i} \frac{\partial L}{\partial v_i}\,
              \frac{\tau_i - b_i v_i|v_i|}{J_{m,i}}
            + \sum_{i} \frac{\partial L}{\partial \alpha_i}\,
              \frac{\alpha_{c,i}-\alpha_i}{\tau_s} \nonumber\\
           &= a_{\mathrm{CBF}}(x)^{\!\top} u \;-\; \delta_h(x), 
    \label{eq:hdot_affine}
\end{align}
which is affine in the physical command $u=[\tau^{\!\top},\alpha_c^{\!\top}]^{\!\top}$. Here
\begin{equation}
\resizebox{0.89\columnwidth}{!}{$
\begin{aligned}
    a_{\mathrm{CBF}}
      = \begin{bmatrix}
          D_m\nabla_v L \\[1pt]
          D_s\nabla_\alpha L
         \end{bmatrix},
    \quad
    \delta_h
      = \nabla_v L^{\!\top}D_m\,b\odot\phi(v)
       + \nabla_\alpha L^{\!\top}D_s\,\alpha,
\end{aligned}
$}
\label{eq:acbf}
\end{equation}
with $D_m=\diag(J_{m,i}^{-1})$ and $D_s=\tau_s^{-1}I$. The scalar $\delta_h$ is the \emph{barrier drift}, 
distinct from the wrench-rate drift $d$ of \eqref{eq:Md}. Both gradients come from Lemma~\ref{lem:gradients} in closed form, so \eqref{eq:acbf} costs one factorization of $D$ per step.
 
\begin{assumption}\label{as:reg}
$a_{\mathrm{CBF}}(x)\neq0$, equivalently
$\big(\nabla_v L,\nabla_\alpha L\big)\neq(0,0)$, on the operating set.
\end{assumption}
 
Under Assumption~\ref{as:reg} the physical commands appear in $h$'s first derivative with a nonvanishing coefficient, so the barrier has relative degree one and high-order constructions \cite{Breeden2021}---with the added conservatism they impose---are not required.

Assumption~\ref{as:reg} cannot be strengthened to a sign or magnitude condition, 
since the coefficient multiplying $\tau_i$ is $(\partial L/\partial v_i)/J_{m,i}$, zero wherever $\nabla_v L$ is---exactly the configuration of Proposition~\ref{prop:degeneracy}, where the motor block of $a_{\mathrm{CBF}}$ vanishes and the servo block carries the entire row. We therefore monitor Assumption~\ref{as:reg} online rather than assert it globally.



\subsection{The unified physical-command quadratic program}

Let $w_{\mathrm{des}}$ be the wrench requested by the SE(3) outer loop and $w=A(\alpha)\phi(v)$ the wrench currently produced; the target wrench rate
\begin{align}
    \dot w_{\mathrm{tar}} = K_w\big(w_{\mathrm{des}} - w\big), \qquad K_w > 0,
    \label{eq:wtar}
\end{align}
is a proportional feedback evaluated at every control instant. 

\begin{definition}[\ac{UPC-QP}]\label{def:upcqp}
With $M,d$ from \eqref{eq:affine}--\eqref{eq:Md}, weights $W\succ0$ and $R\succ0$, trim $u_{\mathrm{ref}}=[\,b\odot\phi(v)^{\!\top},\ \alpha^{\!\top}]^{\!\top}$ and barrier gain $\chi>0$,
\begin{align}
    u^\star = \arg\min_{u}\ &
    \tfrac12\big\|Mu + d - \dot w_{\mathrm{tar}}\big\|_W^2
    + \tfrac12\big\|u-u_{\mathrm{ref}}\big\|_R^2
    \label{eq:obj}\\
    \mathrm{s.t.}\quad
    & a_{\mathrm{CBF}}^{\!\top}u \;\ge\; -\chi\,h + \delta_h,
    \label{eq:cbfrow}\\
    & |\tau|\le\bar\tau,
    \label{eq:taubox}\\
    & \alpha_c \in [\alpha-\tau_s\bar u,\ \alpha+\tau_s\bar u]
      \cap[\underline\alpha,\bar\alpha].
    \label{eq:alphabox}
\end{align}
\end{definition}

\begin{figure}[tb]
    \centering
    \resizebox{0.85\columnwidth}{!}{%
        \input{figures/tikz/fig_cbfgeom.tex}}
    \vspace{-0.5em}    
    \caption{\small (a)~The certified set $\{h\ge0\}$ in the actuator state $x=(v,\alpha)$, with outward gradient $\nabla h$ near the boundary. (b)~The same requirement in physical-command space: the barrier row \eqref{eq:cbfrow} cuts a half-space from the actuator box $\mathcal{U}$, and $u^\star$ projects the unconstrained minimizer of \eqref{eq:obj} onto it. The margin $\rho_{\mathrm{CBF}}$ of \eqref{eq:rho} is positive exactly when the row is enforceable.}
    \label{fig:cbfgeom}
    \vspace{-0.5em}
\end{figure}

Two features: \eqref{eq:cbfrow} is $\dot h\ge-\chi h$ rewritten through \eqref{eq:hdot_affine}, a single barrier inequality (Figure~\ref{fig:cbfgeom}b); and \eqref{eq:alphabox} encodes the servo rate limit \emph{exactly} rather than as a penalty, so the returned command is always executable.
 
\begin{proposition}[Well-posedness and redundancy resolution]\label{prop:well}
The objective \eqref{eq:obj} is strictly convex, the box is nonempty compact, and whenever \eqref{eq:cbfrow} is compatible with that box the minimizer $u^\star$ is unique.
\end{proposition}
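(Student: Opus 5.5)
The argument splits into the three claims of the statement: strict convexity, nonemptiness and compactness of the box, and uniqueness under compatibility. We treat them in that order and finish with Weierstrass plus strict convexity.

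\emph{Strict convexity.} Expanding \eqref{eq:obj}, the Hessian in $u$ is
\[
H = M^{\!\top} W M + R .
\]
The first term is positive semidefinite because $W\succ0$. Note that $M$ is $m\times 2n$ with $2n>m$, so $M^{\!\top}WM$ alone is singular; the tracking term by itself cannot give strict convexity. The regularization term supplies it, since $R\succ0$ gives $H\succeq R\succ0$. The objective is therefore a quadratic with positive definite Hessian, hence strictly (indeed strongly) convex on $\mathbb{R}^{2n}$. This is also where the ``redundancy resolution'' in the title comes from: the $(2n-m)$-dimensional null space of $M$ is resolved by the pull toward $u_{\mathrm{ref}}$.

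\emph{The box.} Constraint \eqref{eq:taubox} is a product of intervals $[-\bar\tau_i,\bar\tau_i]$, nonempty since $\bar\tau_i>0$. Constraint \eqref{eq:alphabox} is, componentwise, the intersection of two closed bounded intervals. It is nonempty provided the current tilt satisfies $\alpha_i\in[\underline\alpha,\bar\alpha]$, which the servo dynamics \eqref{eq:servo} guarantee: $\alpha_{c,i}=\alpha_i$ then lies in both intervals because $\tau_s\bar u\ge0$. A finite product of nonempty compact intervals is nonempty and compact, and it is convex.

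\emph{Existence and uniqueness.} Suppose \eqref{eq:cbfrow} is compatible with the box, meaning the closed half-space it defines meets the box. Then the feasible set $\mathcal{F}$ is the intersection of a compact convex set with a closed convex half-space, so it is nonempty, compact and convex. Existence follows from the Weierstrass theorem, since the objective is continuous. For uniqueness, suppose $u_1\neq u_2$ are both minimizers. By convexity their midpoint is feasible, and strict convexity gives it a strictly smaller objective value, a contradiction.

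The only delicate point is nonemptiness of the servo interval, which requires $\alpha$ to lie inside the mechanical limits. The proof should state this as inherited from the plant \eqref{eq:servo} rather than assumed. Beyond that, the compatibility hypothesis is exactly what is needed; no constraint qualification is required for uniqueness, since uniqueness comes from strict convexity and not from the multipliers. Finally, $a_{\mathrm{CBF}}\neq0$ (Assumption~\ref{as:reg}) is not used here. It matters only for whether compatibility can be achieved, which is a separate question.
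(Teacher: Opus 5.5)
Your proof is correct and follows the paper's argument: $H=M^{\top}WM+R\succ0$ because $R\succ0$ dominates the singular tracking term, the box is nonempty via the witness $\tau=0$, $\alpha_c=\alpha$ and compact as a product of closed intervals, and intersecting it with the half-space \eqref{eq:cbfrow} leaves a nonempty compact convex set on which a strictly convex function has a unique minimizer. You only spell out that last step (Weierstrass plus the midpoint argument) and make explicit that $\alpha_c=\alpha$ is admissible only because $\alpha\in[\underline\alpha,\bar\alpha]$, which the paper simply asserts holds ``by construction''.
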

 
\begin{proof}
The Hessian of \eqref{eq:obj} is $H=M^{\!\top}WM+R$. Since $M\in\mathbb{R}^{m\times 2n}$ with $2n>m$, the tracking term $M^{\!\top}WM$ is singular: it is positive semidefinite with a kernel of dimension at least $2n-m$. Adding $R\succ0$ gives $z^{\!\top}Hz\ge z^{\!\top}Rz>0$ for all $z\neq0$, so $H\succ0$ and the objective is strictly convex. The box is nonempty because the current $\alpha$ satisfies \eqref{eq:alphabox} by construction and $\tau=0$ satisfies \eqref{eq:taubox}, and it is compact as a finite product of closed bounded intervals. Intersecting with the half-space \eqref{eq:cbfrow} preserves convexity and compactness, and a strictly convex function attains a unique minimum on a nonempty compact convex set.
\end{proof}
 
This also resolves the redundancy without an explicit null-space parameterization: on the $(2n-m)$-dimensional kernel of $M^{\!\top}WM$ the tracking term is blind, and the $R$-weighted term selects the command closest to trim $u_{\mathrm{ref}}$. When $w_{\mathrm{des}}$ is not instantaneously reachable, the solver degrades tracking optimally in the metric $W$ while \eqref{eq:cbfrow}--\eqref{eq:alphabox} continue to hold exactly---the certificate is never traded against the task.



\subsection{Feasibility margin and conditional invariance}

The guarantee below is conditional on \eqref{eq:cbfrow} being satisfiable within the box, a condition that is checkable in closed form.
 
\begin{definition}[Feasibility margin]\label{def:rho}
With $\mathcal{U}$ the box \eqref{eq:taubox}--\eqref{eq:alphabox},
\begin{align}
    \rho_{\mathrm{CBF}}(x)
    = \max_{u\in\mathcal{U}} a_{\mathrm{CBF}}^{\!\top}u + \chi h - \delta_h .
    \label{eq:rho}
\end{align}
\end{definition}
 
Because $a_{\mathrm{CBF}}^{\!\top}u$ is linear and $\mathcal{U}$ is a Cartesian product of scalar intervals, the maximizer is the vertex aligned with $\operatorname{sign}(a_{\mathrm{CBF}})$ and \eqref{eq:rho} evaluates in $\mathcal{O}(n)$ operations, 
available \emph{only} because the admissible command set is a box; coupled actuator constraints would need a linear program instead.
 
Geometrically, $\rho_{\mathrm{CBF}}$ is the signed distance by which the best available command clears the barrier row (Figure~\ref{fig:cbfgeom}b), evaluated at every instant; a positive value certifies the row is enforceable at that state. On $\rho_{\mathrm{CBF}}\le0$ the controller terminates and reports---\eqref{eq:cbfrow} is never relaxed into a soft penalty, since a silently dropped constraint would void the certificate precisely when it binds.

\begin{theorem}[Conditional forward invariance]\label{thm:invariance}
Let $h$ be continuously differentiable on an open set containing the closed-loop trajectory, let Assumption~\ref{as:reg} hold along it, and suppose $\rho_{\mathrm{CBF}}(x(t))\ge0$ on $[0,T]$ with a locally Lipschitz solution map $x\mapsto u^\star(x)$. If $h(x(0))\ge0$ then $h(x(t))\ge0$ for all $t\in[0,T]$.
\end{theorem}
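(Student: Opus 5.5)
The plan is the standard comparison-lemma argument for relative-degree-one barriers, run on the closed loop produced by the \ac{UPC-QP}. First I would establish that the closed-loop vector field is well defined along the trajectory. Since $\rho_{\mathrm{CBF}}(x(t))\ge0$ on $[0,T]$, the vertex of $\mathcal{U}$ aligned with $\operatorname{sign}(a_{\mathrm{CBF}})$ satisfies \eqref{eq:cbfrow}. The feasible set of Definition~\ref{def:upcqp} is therefore nonempty, and by Proposition~\ref{prop:well} the minimizer $u^\star(x)$ exists and is unique. The plant \eqref{eq:motor}--\eqref{eq:servo} is locally Lipschitz in $x$ and affine in $u$, and $u^\star$ is assumed locally Lipschitz. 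Composing them gives a locally Lipschitz closed-loop field $f_{\mathrm{cl}}(x)$, so the solution $x(t)$ is unique and absolutely continuous (indeed $C^1$) on $[0,T]$.

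Next I would compute $\dot h$ along this solution. Because $h$ is $C^1$ on an open set containing the trajectory, the chain rule applies, and \eqref{eq:hdot_affine} gives $\dot h(t)=a_{\mathrm{CBF}}(x(t))^{\!\top}u^\star(x(t))-\delta_h(x(t))$. The minimizer is feasible, so constraint \eqref{eq:cbfrow} holds at every $t$. Rearranged, this yields the differential inequality $\dot h(t)\ge-\chi\,h(t)$ for all $t\in[0,T]$. Assumption~\ref{as:reg} enters here only to guarantee relative degree one: $u$ genuinely appears in $\dot h$, so the row is a meaningful constraint and not the vacuous or infeasible condition $0\ge-\chi h+\delta_h$. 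The feasibility hypothesis already covers this, so I would note the role of the assumption rather than lean on it.

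Finally I would integrate the inequality. Set $g(t)=e^{\chi t}h(x(t))$. Then $\dot g=e^{\chi t}(\dot h+\chi h)\ge0$, so $g$ is nondecreasing and $h(x(t))\ge e^{-\chi t}h(x(0))\ge0$ for all $t\in[0,T]$. Equivalently, one can invoke the comparison lemma against $\dot y=-\chi y$ with $y(0)=h(x(0))$.

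The main obstacle is the regularity step, since the theorem deliberately assumes Lipschitz continuity of $u^\star$ rather than proving it. The care needed is to confirm that the closed-loop trajectory exists on all of $[0,T]$ with $\dot h$ defined pointwise, so that the differential inequality holds everywhere and not merely at sample instants. I would also make sure nothing relies on $h$ being $C^1$ at $D\not\succ0$. This is ruled out because $h$ is finite on the trajectory: $h\ge0$ is being propagated, and the $C^1$ hypothesis on an open neighborhood keeps $D\succ0$ there.
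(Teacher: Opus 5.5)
Your proposal is correct and matches the paper's proof. Feasibility ($\rho_{\mathrm{CBF}}\ge0$) makes the applied $u^\star$ satisfy \eqref{eq:cbfrow}, which by \eqref{eq:hdot_affine} reads $\dot h\ge-\chi h$; local Lipschitz continuity of $u^\star$ supplies the closed-loop solution; and comparison with $y=h(x(0))e^{-\chi t}$ finishes the argument. Your integrating-factor step $g=e^{\chi t}h$ simply proves that comparison explicitly, and you are right that Assumption~\ref{as:reg} is not needed for the argument: the paper's proof never uses it either.
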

 
\begin{proof}
Feasibility means the applied $u^\star$ satisfies \eqref{eq:cbfrow}, which by \eqref{eq:hdot_affine} reads $\dot h\ge-\chi h$ pointwise along the closed-loop vector field. Local Lipschitz continuity of $u^\star$ gives existence and uniqueness of the closed-loop solution on $[0,T]$. Comparing $h$ with $y(t)=h(x(0))e^{-\chi t}$ (solving $\dot y=-\chi y$) via the comparison lemma yields $h(x(t))\ge h(x(0))e^{-\chi t} \ge0$ iff $h(x(0))\ge0$.
\end{proof}
 
Theorem~\ref{thm:invariance} states the trajectory never leaves the certified set (Figure~\ref{fig:cbfgeom}a). Its hypotheses are monitored, not established once. Feasibility is certified online through $\rho_{\mathrm{CBF}}$ and Assumption~\ref{as:reg} is checked each step; we do not establish the Lipschitz property of the solution map, and strict convexity gives uniqueness of $u^\star$ but not continuity in $x$. A stronger, unconditional statement would require proving feasibility over the whole operating envelope, which we have not done.



\section{Simulation Study}
\label{sec:sim}

Validation uses a reconstructed actively articulated octorotor ($n=8$, $m=6$). Structural parameters, rotor placement, and the wrench map matrix derive from the Omnidirectional Octorotor architecture \cite{Hamandi2025prototype}, augmented here with servo-actuated tilt; the physical parameters (Table~\ref{tab:params}) combine literature measurements with synthesized capacity bounds. The torque limit $\bar{\tau}$ is set so $v^{\mathrm{sat}}=\sqrt{\bar{\tau}/b}$ coincides with the manufacturer's maximum rotor speed, coupling the capacity model to empirical actuator envelopes. The operating point $v^\star=v^{\mathrm{sat}}/\sqrt{3}=\SI{577.4}{\radian\per\second}$ follows from \eqref{eq:psi}, and the readiness floor is $\ell_{\min}=L(v^\star,\alpha_0)-2=26.098$ nats. The lumped drag coefficient $b$ is identified with the static propeller torque coefficient $c_\tau$; the resulting non-dimensional coefficients ($C_T=0.101$, $C_Q=0.0071$, $C_P=0.044$ at $\rho=\SI{1.225}{\kilo\gram\per\metre\cubed}$, $D=\SI{0.229}{\metre}$) fall within the UIUC Propeller Database's \cite{UIUCPropDB} thin-electric spectrum.

\begin{table}[t]
    \caption{\small Platform parameters.}
    \label{tab:params}
    \vspace{-0.5em}
    \renewcommand{\arraystretch}{0.7}
    \centering\footnotesize\setlength{\tabcolsep}{3pt}
    \begin{adjustbox}{max width=0.98\columnwidth}
    \begin{tabular}{@{}lll@{\hspace{3ex}}lll@{}}
    \toprule
    Symbol & Value & Unit & Symbol & Value & Unit \\
    \midrule
    $n$, $m$                      & $8$, $6$                      & ---                               & $J_m$              & $5\times10^{-5}$ & \si{\kilo\gram\metre\squared} \\
    arm length                    & $0.246$                       & \si{\metre}                       & $\tau_s$           & $0.05$           & \si{\second} \\
    mass                          & $2.0$                         & \si{\kilo\gram}                   & $b$                & $=c_\tau$        & --- \\
    inertia                       & $\diag(0.0217,0.0217,0.04)$ & \si{\kilo\gram\metre\squared}     & $\bar\tau$         & $0.137$          & \si{\newton\metre} \\
    $c_f$                         & $8.59\times10^{-6}$           & \si{\newton\second\squared}       & $v^{\mathrm{sat}}$ & $1000$           & \si{\radian\per\second} \\
    $c_\tau$                      & $1.37\times10^{-7}$           & \si{\newton\metre\second\squared} & $v^\star$          & $577.4$          & \si{\radian\per\second} \\
    $\bar u$                      & $180$--$462$                  & \si{\degree\per\second}           & $\ell_{\min}$      & $26.098$         & nats \\
    $\underline\alpha,\bar\alpha$ & $\pm30$                       & \si{\degree}                      &                    &                  & \\
    \bottomrule
    \end{tabular}
    \end{adjustbox}
\end{table}

To isolate the contributions of structural articulation and the proposed \ac{CBF}, four allocation schemes are evaluated under an identical SE(3) tracking loop:
\begin{enumerate}\itemsep1pt \parskip0pt \topsep2pt
    \item \textbf{Static Pseudo-Inverse:} standard minimum-norm baseline \cite{Johansen2013} $\phi_{\mathrm{des}}=A^{+}w_{\mathrm{des}}$, fixed tilt $\alpha_{0,i}=\pm\SI{15}{\degree}$, ex-post actuator clipping.
    \item \textbf{Fixed-Tilt \ac{DAAM}:} Definition~\ref{def:upcqp} with static tilt and the barrier constraint \eqref{eq:cbfrow} omitted.
    \item \textbf{Uncertified Articulated \ac{DAAM}:} Definition~\ref{def:upcqp} with active tilt, but without the barrier constraint \eqref{eq:cbfrow}.
    \item \textbf{Robust CBF Filter:} Definition~\ref{def:upcqp}, incorporating both active articulation and the barrier constraint \eqref{eq:cbfrow}.
\end{enumerate}

System dynamics are integrated explicitly at $\SI{5}{\milli\second}$ over a \SI{10}{\second} horizon across five environments: a benign hover, a $\SI{0.5}{\metre}$ position step, an aggressive lateral maneuver, and that maneuver under mild (\SI{3}{\newton}) and strong (\SI{8}{\newton}) raised-cosine lateral gusts. The complete pipeline---plant models, allocator parameters, seeds, and data-generation scripts---is at \coderepo; the allocator is implemented in Python, with \eqref{eq:obj} solved as a bounded least-squares program.

The gusts enter as external additive forces on the airframe; rotor coefficients $c_f, c_\tau$ and the allocation map $A(\alpha)$ remain at static-hover values throughout, isolating the allocation problem from the fluid dynamics of high-advance-ratio crosswinds. Modeling the inflow-dependent degradation a true \SI{8}{\newton} gust would induce is left to future work.

\begin{table}[t]
    \caption{\small Closed-loop metrics ($\bar u=\SI{276}{\degree\per\second}$). $h_{\min}<0$ denotes strict violation of the authority floor.} 
    \label{tab:e4}
    \vspace{-0.5em}
    \renewcommand{\arraystretch}{0.7}
    \centering\footnotesize\setlength{\tabcolsep}{3pt}
    \begin{adjustbox}{max width=0.98\columnwidth}
    \begin{tabular}{@{}llccccc@{}}
    \toprule
    Task & Allocator & RMS$_p$ [m] & $h_{\min}$ & sat.\ [\%] & tilt [$^\circ$] & RMS$_w$ \\
    \midrule
    Hover       & All & 0.0000 & $+1.940$ & 0.00 & 0.00 & 0.000 \\
    \midrule
    Aggressive  & Static P-Inv & 1.0263 & $+1.681$ & 0.00 & 0.00 & 0.011 \\
                & Fixed-Tilt DAAM & 1.0250 & $+1.567$ & 0.00 & 0.00 & 0.075 \\
                & Uncertified DAAM & 1.0250 & $+1.625$ & 0.00 & 0.08 & 0.075 \\
                & Robust CBF Filter & 1.0250 & $+1.625$ & 0.00 & 0.08 & 0.075 \\
    \midrule
    Mild gust   & Static P-Inv & 1.0738 & $\mathbf{-0.440}$ & 0.00 & 0.00 & 0.016 \\
                & Fixed-Tilt DAAM & 1.0737 & $\mathbf{-0.749}$ & 0.00 & 0.00 & 0.101 \\
                & Uncertified DAAM & 1.0737 & $\mathbf{-0.156}$ & 0.00 & 0.23 & 0.101 \\
                & Robust CBF Filter & 1.0737 & $\mathbf{+0.251}$ & 0.00 & 1.06 & 0.101 \\
    \midrule
    Strong gust & Static P-Inv & \multicolumn{5}{c}{\emph{diverged at $t=\SI{6.005}{\second}$}} \\
                & Fixed-Tilt DAAM & 1.5461 & $\mathbf{-63.25}$ & 20.99 & 0.00  & 1.320 \\
                & Uncertified DAAM & 1.4444 & $\mathbf{-15.30}$ & 0.18  & 11.80 & 0.214 \\
                & Robust CBF Filter & 1.4444 & $\mathbf{+0.012}$ & 0.21  & 14.98 & 0.212 \\
    \bottomrule
    \end{tabular}
    \end{adjustbox}
\end{table}

\begin{figure}[tb]
    \centering
    \resizebox{0.95\columnwidth}{!}{\input{figures/tikz/fig_readiness.tex}}
    \vspace{-0.75em}
    \caption{\small Readiness trajectories under the strong gust: Static Pseudo-Inverse (green), Fixed-Tilt \ac{DAAM} (red), Uncertified Articulated \ac{DAAM} (orange), Robust \ac{CBF} Filter (blue). Readiness floor $h=0$ (dashed) and gust onset at $t=\SI{2}{\second}$ (dash-dotted), both gray.}
    \label{fig:readiness}
    \vspace{-0.75em}
\end{figure}

\emph{Authority preservation and closed-loop tracking.} When authority is abundant (hover, aggressive), Table~\ref{tab:e4} shows all allocators tracking nearly identically and the barrier strictly inactive  ($0$ of $2000$ instants, $h \ge 1.567$): the \ac{CBF} costs nothing outside the critical boundary. The Static Pseudo-Inverse is most accurate in wrench tracking ($0.011$ vs. $0.075$ \ac{RMS}) since it inverts the static map directly rather than through the rate-based \ac{DAAM} formulation.

Under the mild gust, tracking stays uniform across allocators (isolating readiness as the only quantity that moves) while every uncertified allocator breaches the floor; only the Robust \ac{CBF} Filter holds it, at \SI{1.06}{\degree} of tilt. Under the strong gust the Static Pseudo-Inverse diverges at $t=\SI{6.005}{\second}$, having reached $49.2\%$ saturation and $h_{\min}=-54.53$. Active articulation alone (Uncertified \ac{DAAM}) recovers most of the actuator-usage benefit relative to fixed geometry---saturation and wrench error both fall by roughly an order of magnitude---yet still violates the floor by $15.30$ nats: articulation is necessary but not sufficient. Re-solving with and without \eqref{eq:cbfrow} at every instant shows the barrier row active on only $5.8\%$ of steps (median command displacement  $11.4\%$): it converts the Uncertified \ac{DAAM}'s recovered capacity into a certified floor, bounding the state at the envelope's edge ($h_{\min}=+0.012$) without added saturation (Figure~\ref{fig:readiness}).

The commands issued are physically executable (Figure~\ref{fig:inputs}): rotor speeds stay within $[0.20,0.89]\,v^{\mathrm{sat}}$, reaching neither saturation nor a zero-crossing---the two loci at which the certificate collapses---and tilt stays inside $\pm30^\circ$, touching the limit only transiently. The vehicle rejects the disturbance through articulation, as Section~\ref{sec:art} predicts, rather than forcing motors past capacity.
 
\begin{figure}[tb]
    \centering
    \resizebox{0.95\columnwidth}{!}{\input{figures/tikz/fig_inputs.tex}}
    \vspace{-0.75em}
    \caption{\small Physical actuator commands under the strong gust, Robust \ac{CBF} Filter (disturbance onset $t=\SI{2}{\second}$, dashed gray). Top: rotor speeds, avoiding saturation and zero-crossings. Bottom: tilt angles, using the mechanical range to absorb the commanded wrench.}
    \label{fig:inputs}
    \vspace{-0.75em}
\end{figure}

%


\emph{Robustness to parametric uncertainty.} We first perturb the plant parameters while leaving the model class intact. On the strong-gust trajectory, mass ($\pm10\%$), torque limit ($\pm15\%$), and thrust coefficient ($\pm10\%$) are randomized and the servo-rate limit is drawn from $\{180,276,318,462\}\,\si{\degree\per\second}$, over eight trials with shared initial states and seeds across allocators. The Robust \ac{CBF} Filter holds $h_{\min}>0$ in all eight (median $+0.018$, worst $+0.004$) against $0/8$ for the Fixed-Tilt \ac{DAAM} (median $-58.24$), at lower median \ac{RMS} position error (\SI{1.565}{} vs.\ \SI{1.689}{\metre}). Since $\rho_{\mathrm{CBF}}$ stays positive throughout (minimum $81.85$), Theorem~\ref{thm:invariance}'s hypotheses hold, and the $+0.004$ worst case shows the filter tracking the true capacity boundary rather than resting on slack.


\emph{Robustness to unmodeled aerodynamics.} We next introduce a \emph{structural} mismatch. The allocator is unchanged---barrier, gradients, and allocation follow the static thrust map $A(\alpha)\phi(v)$ of \eqref{eq:wrench}---but the integrator applies a higher-fidelity wrench it never sees, computed from the air-relative velocity $v_{\mathrm{air}} = v_{\mathrm{body}} - R^{\!\top}v_{\mathrm{wind}}$: advance-ratio thrust loss, an in-plane rotor $H$-force with induced torque, and quadratic fuselage drag. The Robust \ac{CBF} Filter still holds the floor ($h_{\min}=+0.012$), articulating to absorb the discrepancy as bounded wrench-tracking error via the $W$-weighted cost of \eqref{eq:obj} while the barrier stays hard. The uncertified allocators fail more severely than before: Fixed-Tilt \ac{DAAM} collapses to $h_{\min}=-63.25$ at $21.0\%$ saturation, the Static Pseudo-Inverse diverges at $t=\SI{6.01}{\second}$. Surviving a plant it does not model, the certificate enforces \emph{structural}, not model-matched, safety.



\section{Critical Discussion and Future Work}\label{sec:disc}

The readiness level is a log-volume, which can be large while the reachable set is thin in one direction, so $h\ge0$ does not imply authority is available in whatever direction a disturbance demands. To test this we defined, for analysis only, the worst-direction authority
\begin{align}
    \eta(v,\alpha) = \min_{\|y\|=1}\ \max\big\{y^{\!\top}\dot w \;:\;
    \dot w \in J_v(v,\alpha)\,\mathcal{B}_{\bar a}\big\},
    \label{eq:eta}
\end{align}
with $\mathcal{B}_{\bar a}=\{\xi: |\xi|\le\bar a\}$ the motor-rate box: the smallest, over unit wrench-rate directions, of the largest response that box can deliver in that direction, one linear program per sampled direction. Over $40$ states sampled along the gust trajectories with $60$ directions each, states with $h<0$ have median $\eta=2.74$ against $31.25$ for $h\ge0$---an order of magnitude, precisely the collapse the floor exists to prevent. Above the floor, however, correlation between $h$ and $\eta$ is $+0.04$, and between $h$ and the median-direction response $-0.34$.  Consequently, $\ln\det D$ is used strictly as a threshold, avoiding a directional linear program in the control loop. Restricting the certificate to motor authority also introduces measurable conservatism: at the operating point $v^\star$, the discarded marging $L_\mathrm{art}-L$ is $4.20$, $6.38$, $7.19$, and $9.49$ nats at the four commercial servo rates of Table~\ref{tab:params} ($180$--$\SI{462}{\degree\per\second}$)---excluding servo capacity to preserve the saturation divergence (Proposition~\ref{prop:optionA}) leaves this margin uncredited, particularly for high-speed actuators---a trade-off that preserves single-inequality enforceability, leaving a unified motor-servo readiness barrier an open problem.

\emph{Theoretical boundaries and algorithmic scope.}
Theorem~\ref{thm:invariance}'s guarantee rests on three conditions: continuous online feasibility of the allocation quadratic program, monitored via $\rho_{\mathrm{CBF}}$; a locally Lipschitz solution map, though strict convexity gives only a unique $u^\star$, not continuity in the state, since the Lipschitz property can fail at active-set transitions; and the degeneracy of Proposition~\ref{prop:degeneracy}, proven for symmetric and near-symmetric configurations only, leaving its persistence under manufacturing asymmetry untested. Empirically, the filter bounds the state precisely at the envelope's edge ($h_{\min}=+0.012$), confirming the barrier formulation---not mere availability of articulation---preserves the vehicle under severe disturbance.

\emph{Methodological trade-offs and future directions.}
Integrating a rate-based \ac{CBF} into the allocation framework trades a marginal loss of wrench-tracking precision in benign flight (Section \ref{sec:sim}) for strict authority preservation under disturbances that make classical, uncertified allocators diverge; articulation supplies the physical authority, the barrier enforces its correct expenditure. These conclusions are bounded by the reconstructed plant's modeling assumptions: static drag approximations, an estimated motor inertia $J_m$, and unmodeled aeroelastic and servo-loading effects introduce parametric uncertainty in absolute readiness values, though the underlying geometric and thresholding properties remain invariant. Future work: hardware validation, broader robustness testing, and a less conservative barrier.



\bibliographystyle{IEEEtran}
\bibliography{references}


\end{document}

%% file: figures/tikz/fig_section.tex
\begin{tikzpicture}[font=\scriptsize, line width=0.5pt, ar/.style={-{Latex[length=2mm]}}]
    \def\W{8.2}   
    \def\H{2.55}  
    
    \fill[cOrange!16] (0,-0.62) rectangle (\W,0.62);
    \fill[pattern=north east lines,pattern color=cRed!30] (0,2.15) rectangle (\W,\H);
    \fill[pattern=north east lines,pattern color=cRed!30] (0,-\H) rectangle (\W,-2.15);
    
    \fill[cBlue!17] (0,0.98) rectangle (\W,1.98);
    \fill[cBlue!17] (0,-1.98) rectangle (\W,-0.98);
    \draw[cBlue!70,line width=0.6pt] (0,0.98) -- (\W,0.98);
    \draw[cBlue!70,line width=0.6pt] (0,1.98) -- (\W,1.98);
    \draw[cBlue!70,line width=0.6pt] (0,-0.98) -- (\W,-0.98);
    \draw[cBlue!70,line width=0.6pt] (0,-1.98) -- (\W,-1.98);
    
    \draw[ar] (0,-\H-0.15) -- (0,\H+0.32) node[below left,font=\scriptsize]
      {$v_k$};
    \draw[ar] (-0.15,0) -- (\W+0.25,0) node[below left,font=\scriptsize]
      {$w$};
    \draw[cGrey!70,dotted] (0,1.48) -- (\W,1.48);
    \draw[cGrey!70,dotted] (0,-1.48) -- (\W,-1.48);
    \node[anchor=east,font=\tiny,cGrey!70!black] at (-0.06,1.48) {$+v^\star$};
    \node[anchor=east,font=\tiny,cGrey!70!black] at (-0.06,-1.48) {$-v^\star$};
    \node[anchor=east,font=\tiny,cOrange!80!black] at (-0.06,0) {$0$};
    
    \draw[cGrey!75,line width=1.1pt] plot[smooth,tension=0.7]
      coordinates {(0.15,1.52)(2.0,1.60)(4.0,1.50)(6.0,1.35)(7.9,1.12)};
    \draw[cGrey!75,line width=1.1pt] plot[smooth,tension=0.7]
      coordinates {(0.15,-1.52)(2.0,-1.60)(4.0,-1.50)(6.0,-1.35)(7.9,-1.12)};
    \node[cGrey!60!black,font=\tiny,anchor=west] at (7.95,1.12) {$\Sigma_+$};
    \node[cGrey!60!black,font=\tiny,anchor=west] at (7.95,-1.12) {$\Sigma_-$};
    
    \draw[cOrange,line width=1.2pt] plot[smooth,tension=0.7]
      coordinates {(0.30,1.53)(1.6,1.60)(3.1,1.53)};
    \draw[cOrange,line width=1.2pt] plot[smooth,tension=0.7]
      coordinates {(3.1,-1.53)(5.0,-1.44)(7.6,-1.14)};
    \draw[cOrange,line width=1.2pt,dashed,{Latex[length=1.6mm]}-{Latex[length=1.6mm]}]
      (3.1,1.49) -- (3.1,-1.49);
    \fill[cOrange] (3.1,1.53) circle (1.5pt); \fill[cOrange] (3.1,-1.53) circle (1.5pt);
    
    \draw[cGreen!65!black,line width=1.3pt] plot[smooth,tension=0.7]
      coordinates {(0.30,1.53)(1.6,1.58)(3.1,1.50)(4.6,1.40)(6.2,1.28)(7.6,1.18)};
    \fill[cGreen!65!black] (7.6,1.18) circle (1.5pt);
    
    \draw[{Latex[length=1.5mm]}-{Latex[length=1.5mm]},cGreen!45!black,line width=0.6pt] (7.6,1.05) -- (7.9,1.05);
    \draw[cGreen!45!black, densely dotted] (7.6, 1.18) -- (7.6, 1.05);
    \draw[cGrey!75, densely dotted] (7.9, 1.12) -- (7.9, 1.05);
    \node[cGreen!45!black,font=\tiny,anchor=north,align=center] at (7.75,1.02)
      {slack $\delta$};
    
    \node[cOrange!80!black,font=\tiny,anchor=west,align=left] at (0.12,0.34)
      {zero-crossing (loss of authority)};  
    \node[cRed,font=\tiny,anchor=west] at (0.12,2.36) {motor saturation};
    \node[cRed,font=\tiny,anchor=west] at (0.12,-2.36) {motor saturation};
    \node[cBlue,font=\tiny,anchor=west] at (0.12,1.80) {certified safe band};
    \node[cBlue,font=\tiny,anchor=west] at (0.12,-1.83) {certified safe band};

\end{tikzpicture}

%% file: figures/tikz/fig_geometry.tex
\begin{tikzpicture}[font=\scriptsize,
  >={Stealth[length=2mm]},
  lbl/.style={font=\scriptsize, align=center}]

    \begin{scope}[shift={(0,0)}]
      \draw[thick, fill=cGrey!12] (-0.95,-0.95) rectangle (0.95,0.95);
      \draw[->, cGrey] (-1.25,0) -- (1.35,0) node[right]{$\dot v_1$};
      \draw[->, cGrey] (0,-1.25) -- (0,1.35) node[above]{$\dot v_2$};
      \draw[<->, cBlue, thick] (0.95,-1.12) -- node[below, cBlue]{$2\bar a_i$} (-0.95,-1.12);
      \node[lbl] at (0,-1.75) {(a) $|\dot v_i|\le\bar a_i(v_i)$};
    \end{scope}
    
    \draw[->, very thick] (2.0,0) -- node[above]{$J_v$} node[below]{\eqref{eq:wrenchrate}} (3.3,0);
    
    \begin{scope}[shift={(4.9,0)}]
      \draw[thick, fill=cGrey!12]
        (-1.15,0.35) -- (-0.45,1.0) -- (0.75,0.85) -- (1.15,-0.3)
        -- (0.4,-1.0) -- (-0.8,-0.75) -- cycle;
      \draw[cBlue, very thick, fill=cBlue!14, rotate=18] (0,0) ellipse (0.86 and 0.52);
      \node[cBlue] at (0.05,0.02) {$\mathcal{E}$};
      \node[lbl] at (0,-1.75) {(b) zonotope $\supseteq$ ellipsoid $\mathcal{E}$};
    \end{scope}
    
    \draw[->, very thick] (6.6,0) -- node[above]{$\det$} (7.9,0);
    
    \begin{scope}[shift={(9.25,0)}]
      \draw[cBlue, very thick, fill=cBlue!14, rotate=18] (0,0) ellipse (1.02 and 0.66);
      \draw[<->, cRed, thick, rotate=18] (-1.02,0) -- node[above, cRed, sloped, xshift=-14pt, yshift=-4pt]{$\sqrt{\nu_1}$} (1.02,0);
      \draw[<->, cOrange, thick, rotate=18] (0,-0.66) -- node[right, cOrange, xshift=-1pt, yshift=-3pt]{$\sqrt{\nu_2}$} (0,0.66);
      \node[lbl] at (0,-1.75) {(c) $\det D=\prod_k\nu_k$};
    \end{scope}
    
    \draw[->, cOrange, very thick, dashed] (4.9,1.15) to[bend left=20]
      node[above, cOrange]{$\alpha$ reshapes $A(\alpha)$, hence $\mathcal{E}$} (9.25,1.15);
\end{tikzpicture}

%% file: figures/tikz/fig_degeneracy.tex
\begin{tikzpicture}[font=\scriptsize, >={Stealth[length=2mm]},
  lbl/.style={font=\scriptsize, align=center}]

    \begin{scope}
      \draw[cGrey, fill=cGrey!8] (-1.7,-1.0) rectangle (1.7,1.6);
      \draw[cGreen, very thick] (-1.35,-0.62) -- (1.35,0.62)
          node[right, cGreen, pos=1.0, xshift=-5.5mm, yshift=-3.5mm]{$\ker J_v$};
      \draw[->, cRed, very thick] (0,0) -- (-0.5,1.09)
          node[above, cRed, yshift=-1mm, xshift=2mm]{$\nabla_v L$};
      \draw[rotate=50] (0.32,-0.12) -- (0.45,0.17) -- (0.16,0.30);  
      \fill[cBlue] (0,0) circle (1.6pt);
      \node[cBlue, below right] at (0,0) {$v^\star$};
      \node[lbl] at (0,-1.45) {(a) fixed geometry\\
        $\nabla_vL\perp\ker J_v \Rightarrow \gamma=0$};
    \end{scope}
    
    \begin{scope}[shift={(4.8,0)}]
      \draw[cGrey, fill=cGrey!8] (-1.9,-1.0) rectangle (1.9,1.6);
      \draw[cGreen, very thick] (-1.35,-0.62) -- (1.35,0.62)
          node[right, cGreen, xshift=-5.5mm, yshift=-3.5mm]{$\ker J_v$};
      \draw[->, cRed, very thick] (0,0) -- (-0.5,1.09) node[above left, cRed, yshift=-1mm, xshift=2mm]{$\nabla_vL$};
      \draw[->, cOrange, very thick] (0,0) -- (1.15,0.95)
          node[right, cOrange, align=left, xshift=-1mm]{$J_\alpha\dot\alpha$};
      \draw[->, cBlue, very thick, dashed] (0,0) -- (0.62,1.28)
          node[above, cBlue, yshift=-1mm]{$\gamma_{\mathrm{art}}>0$};
      \fill[cBlue] (0,0) circle (1.6pt);
      \node[lbl] at (0,-1.45) {(b) articulated\\
        tilt leaves the fibre at constant wrench};
    \end{scope}
\end{tikzpicture}

%% file: figures/tikz/fig_cbfgeom.tex
\begin{tikzpicture}[font=\scriptsize, >={Stealth[length=2mm]},
  lbl/.style={font=\scriptsize, align=center}]

    \begin{scope}
      \draw[cGreen!55, fill=cGreen!10, very thick]
        plot[smooth cycle, tension=0.75]
        coordinates {(-1.6,-0.9) (-0.5,0.95) (1.35,0.75) (1.55,-0.95) (0,-1.5)};
      \node[cGreen!65!black] at (-0.35,-0.25) {$\{h\ge0\}$};
      \node[cGreen!65!black, font=\scriptsize] at (1.05,1.15) {$h=0$};
      \fill[cBlue] (1.32,0.28) circle (1.7pt);
      \node[cBlue, left] at (1.28,0.34) {$x$};
      \draw[->, cRed, thick] (1.32,0.28) -- (2.05,0.45) node[right, cRed]{$\nabla h$};
      \node[lbl] at (0,-1.85) {(a) certified set in $(v,\alpha)$};
    \end{scope}
    
    \begin{scope}[shift={(5.6,0)}]
      \draw[thick, fill=cGrey!12] (-1.5,-1.2) rectangle (1.5,1.2);
      \node[cGrey!60!black] at (-1.0,1.0) {$\mathcal{U}$};
      \fill[cBlue!16] (-0.15,-1.2) -- (1.5,-1.2) -- (1.5,1.2) -- (-0.95,1.2) -- cycle;
      \draw[cBlue, very thick] (-0.15,-1.2) -- (-0.95,1.2);
      \node[cBlue, rotate=-72, anchor=south] at (-0.62,0.05)
        {\tiny $a_{\mathrm{CBF}}^{\!\top}u \ge -\chi h+\delta$};
      \fill[cRed] (0.85,0.55) circle (1.7pt);
      \node[cRed, right] at (0.92,0.62) {$u^\star$};
      \fill[cGrey!70] (-1.15,0.2) circle (1.5pt);
      \node[cGrey!60!black, left] at (-1.2,0.2) {unconstrained};
      \draw[->, cRed, thick, dashed] (-1.15,0.2) -- (0.8,0.52);
      \draw[<->, cGreen!60!black, thick] (1.5,-1.05) -- (0.02,-1.05);
      \node[cGreen!60!black, below] at (0.75,-0.65) {$\rho_{\mathrm{CBF}}$};
      \node[lbl] at (0,-1.85) {(b) physical commands $u=[\tau^\top;\alpha_c^\top]^\top$\\
        box $\cap$ barrier half-space};
    \end{scope}
\end{tikzpicture}

%% file: figures/tikz/fig_readiness.tex
\begin{tikzpicture}
    \begin{axis}[width=\columnwidth, height=4.8cm,
      xlabel={Time [\si{\second}]}, 
      ylabel={Readiness barrier $h$ [nats]}, 
      xmin=0, 
      xmax=10, 
      ymin=-70, 
      ymax=10,
      label style={font=\scriptsize}, tick label style={font=\scriptsize},
      legend style={font=\scriptsize, at={(1.2,1.00)}, anchor=south east, draw=none,
                    fill=white, fill opacity=0.85, text opacity=1, nodes={scale=0.92}},
                    legend cell align=left, grid=major, grid style={gray!20}, legend columns=-1]
      
    \addplot[cGrey, dashed, thick, forget plot] coordinates {(0,0) (10,0)};
    \addplot[cGreen!55!black, thick] table[x index=0, y index=1] {data/cl_h_E45_P.dat};
    \addplot[cRed, thick]    table[x index=0, y index=1] {data/cl_h_E45_A.dat};
    \addplot[cOrange, thick] table[x index=0, y index=1] {data/cl_h_E45_B.dat};
    \addplot[cBlue, very thick] table[x index=0, y index=1] {data/cl_h_E45_C.dat};

    \draw[cGrey, thick, dashdotted] (axis cs:2,-70) -- (axis cs:2,10);
    \node[font=\scriptsize, anchor=west] at (axis cs:2.12,-64) {Gust onset};
    \end{axis}
\end{tikzpicture}

%% file: figures/tikz/fig_inputs.tex
\begin{tikzpicture}
    \begin{axis}[name=top, width=\columnwidth, height=3.5cm,
      ylabel={$v_i/v^{\mathrm{sat}}$}, 
      xmin=0, xmax=10, ymin=0, ymax=1,
      xticklabels={}, 
      label style={font=\scriptsize}, tick label style={font=\scriptsize},
      grid=major, grid style={gray!20},
      cycle list={
        {cBlue, thick, no marks, opacity=0.8},
        {cRed, thick, no marks, opacity=0.8},
        {cGreen, thick, no marks, opacity=0.8},
        {cOrange, thick, no marks, opacity=0.8},
        {myPurple, thick, no marks, opacity=0.8},
        {myYellow, thick, no marks, opacity=0.8},
        {darkgreen, thick, no marks, opacity=0.8},
        {mydarkblue, thick, no marks, opacity=0.8}
      }]
    
    \pgfplotsinvokeforeach{1,...,8}{
      \addplot+ table[x index=0, y index=#1] {data/inputs_E45_C.dat};
    }
    
    \draw[cGrey, thick, dashdotted] (axis cs:2,0) -- (axis cs:2,1);
    \end{axis}
    
    \begin{axis}[at={(top.south west)}, anchor=north west, yshift=-2mm,
      width=\columnwidth, height=3.5cm,
      xlabel={Time [\si{\second}]}, 
      ylabel={$\alpha_i$ [\si{\degree}]}, 
      xmin=0, xmax=10, ymin=-32, ymax=32,
      label style={font=\scriptsize}, tick label style={font=\scriptsize},
      grid=major, grid style={gray!20},
      cycle list={
        {cBlue, thick, no marks, opacity=0.8},
        {cRed, thick, no marks, opacity=0.8},
        {cGreen, thick, no marks, opacity=0.8},
        {cOrange, thick, no marks, opacity=0.8},
        {myPurple, thick, no marks, opacity=0.8},
        {myYellow, thick, no marks, opacity=0.8},
        {darkgreen, thick, no marks, opacity=0.8},
        {mydarkblue, thick, no marks, opacity=0.8}
      }]
    
    \pgfplotsinvokeforeach{9,...,16}{
      \addplot+ table[x index=0, y index=#1] {data/inputs_E45_C.dat};
    }
    
    \draw[cGrey, thick, dashed] (axis cs:0,30) -- (axis cs:10,30);
    \draw[cGrey, thick, dashed] (axis cs:0,-30) -- (axis cs:10,-30);
    \node[font=\scriptsize, cGrey!80!black, anchor=north east] at (axis cs:1.45,32) {$\bar\alpha$};
    \node[font=\scriptsize, cGrey!80!black, anchor=south east] at (axis cs:1.45,-32) {$\underline\alpha$};
    \draw[cGrey, thick, dashdotted] (axis cs:2,-32) -- (axis cs:2,32);
    \end{axis}
\end{tikzpicture}